\documentclass{article}
\usepackage{iclr2027_conference,times}
\iclrfinalcopy
\usepackage{amsmath,amssymb,amsthm}
\usepackage{graphicx,booktabs,multirow,longtable,array,float}
\usepackage{microtype,xcolor,url,hyperref}
\newtheorem{theorem}{Theorem}
\newtheorem{proposition}{Proposition}
\newtheorem{lemma}{Lemma}
\newtheorem{corollary}{Corollary}

\DeclareMathOperator*{\softmax}{softmax}
\DeclareMathOperator{\sinc}{sinc}
\DeclareMathOperator{\Cov}{Cov}
\DeclareMathOperator{\Var}{Var}
\newcommand{\Om}{\Omega}
\newcommand{\mc}{\mathrm{MC}}
\newcommand{\std}{\mathrm{std}}
\newcommand{\E}{\mathbb{E}}
\newcommand{\R}{\mathbb{R}}
\graphicspath{{figures/}}
\hypersetup{pdftitle={Refinement Symmetry in Multimodal Transformers}}

\newcommand{\BudgetKL}{0.0168}

\title{Refinement Symmetry\\in Multimodal Transformers}
\author{Yuhao DU\textsuperscript{1,2}\quad Shunian CHEN\textsuperscript{1}\\
{\normalfont\textsuperscript{1}The Chinese University of Hong Kong, Shenzhen}\\
{\normalfont\textsuperscript{2}Shenzhen Loop Area Institute}\\
{\normalfont\texttt{yuhaodu1@link.cuhk.edu.cn}}}
\hypersetup{pdfauthor={Yuhao DU; Shunian CHEN}}

\begin{document}
\maketitle
\lhead{Preprint}
\begin{abstract}
Attention weights depend on token counts, which change with the representation of a signal.
We study refinement symmetry: splitting a representation while preserving content, position,
visible context, and total mass should preserve its contribution. Building on proportional and
quadrature attention, we show that split invariance forces the local mass factor to be linear
for any fixed positive attention kernel, provided that factor is nondecreasing. For changed representations, a physical coupling bounds attention
error by separating feature change from weight reallocation. In Qwen2.5-Omni-7B, duplicating
half the visual tokens threefold changes 255 of 3,586 MVBench answers under standard attention;
measure weighting preserves every answer under matched visibility. Under natural frame
resampling, it reduces distributional drift. At twofold merging
of a frozen video encoding, a five-seed evaluation shows an all-partition-correct accuracy gain
of 1.04 percentage points over global count weighting (average group mass) and 0.93 points over standard attention.
The advantage over global count also holds on WorldSense but depends on the compression budget.
The result is a representation principle with a measured benefit in robustness across partitions.
\end{abstract}
\section{Introduction}\label{sec:intro}
A token represents part of a signal, but standard attention assigns each key one unit of prior
weight. Sampling more densely or merging patches changes both representation and aggregation.
This is a general property of attention. In multimodal models, independent audio and video rates
set how many keys compete in the shared normalizer (Appendix~\ref{app:rate_panel}).

The dependence reaches predictions. Copying visual representations in
Qwen2.5-Omni-7B~\citep{xu2025qwen25omni} changes MVBench~\citep{li2024mvbench} answers while
keeping embeddings, positions, and visible context fixed; dividing each parent's mass among its
copies preserves every answer (Figure~\ref{fig:overview}).

\begin{figure}[t]
\centering\includegraphics[width=\textwidth]{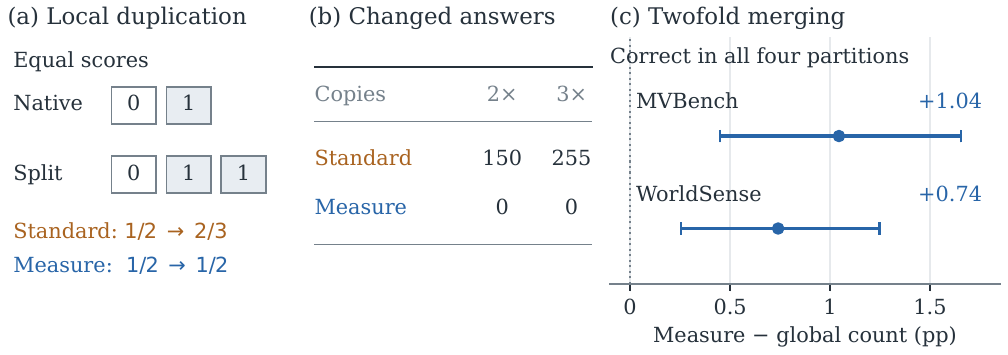}
\caption{\textbf{From token multiplicity to correctness across partitions.}
(a) With equal scores, duplicating only the value-one token shifts standard attention
from $1/2$ to $2/3$; dividing its mass between copies preserves $1/2$. (b) In Qwen2.5-Omni-7B, duplicating half the visual representations changes 150 and 255 of 3,586 MVBench answers
under standard attention, and none under measure weighting with matched positions and visibility.
(c) At twofold merging, measure weighting improves correctness under all four tested partitions
over global count weighting on two benchmarks. Points show mean paired effects across five training seeds; bars are
descriptive, unadjusted 95\% crossed source-video/seed bootstrap intervals.}
\label{fig:overview}
\end{figure}

We call this requirement \emph{refinement symmetry}: splitting represented mass while preserving
content, position, and visibility preserves aggregation. Token Merging already reproduces the contribution of copied keys through
proportional attention~\citep{bolya2023tome}; continuum and geometric attention use quadrature
weights~\citep{calvello2025continuum,berner2026operators,bonev2025sphere}.
We characterize this rule by a finite split requirement, then use a coupling of represented
regions to bound the effect of changing their features and scores. Ordinary text retokenization need not preserve these conditions.

We establish three results:
\begin{itemize}
\item \textbf{Finite characterization.} Invariance to every equal split uniquely selects
mass-proportional weights among monotone local mass rules with a fixed positive kernel.
\item \textbf{Coupled stability.} A physical mass coupling separates value perturbation from
reallocation, giving a sharp bound without an explicit token-count factor.
\item \textbf{Partition robustness.} At twofold merging, inherited group mass improves
all-partition-correct accuracy over global count on MVBench and WorldSense.
\end{itemize}

\section{Refinement symmetry and stability}\label{sec:theory}
\subsection{Represented mass and exact refinement}\label{sec:covariance}
Consider a video interval divided into cells $C_i$. Token $i$ carries content $h_i$ and represents
mass $m_i=\mu(C_i)>0$, such as duration under the measure $\mu$. Splitting a cell into children
whose masses sum to $m_i$, while copying its content and positional features, changes the
representation without adding information. If each copy also has the parent's visible context,
we require the aggregate to remain unchanged. We call this an exact duplicate split.
For a layer that outputs one representation per token, the corresponding requirement is that
children receive copies of their parent's output.

\paragraph{A minimal counterexample.}
For a fixed query, two equally scored tokens with values zero and one yield standard attention
output $1/2$. Splitting the second into two identical copies changes the output to $2/3$.
The represented content is unchanged, but its relative contribution grows. Assigning each copy
half the parent's mass keeps the output at $1/2$. In contrast, duplicating \emph{every} visible
key/value by the same factor leaves standard attention unchanged: the common factor cancels.
The issue is local multiplicity relative to other content, not token count alone.

We fix a reference measure $\mu$, so a cell's mass records how much of the signal it represents.
Different reference measures define different aggregation priors; the split requirement determines
the dependence on mass once this choice is fixed (Appendix~\ref{app:criterion}).

An additive mass corrects aggregation directly. For a positive kernel $K$, scores $s_i$, and values $v_i$, define
\begin{equation}\label{eq:ours}
 A_m^K(s,v)=\frac{\sum_i m_iK(s_i)v_i}{\sum_i m_iK(s_i)}.
 \qquad K(s)=e^s\ \Longrightarrow\ A_m^K=\sum_i\softmax(s+\log m)_i v_i.
\end{equation}
The positive-kernel form makes the source of invariance explicit: mass multiplies the kernel.
Any further score transform, such as temperature, is applied to $s$ before adding $\log m$.

\subsection{Characterization of the local weighting rule}\label{sec:split}
\begin{samepage}
\begin{proposition}[Split invariance and local uniqueness]\label{prop:split}
Equation~\eqref{eq:ours} preserves every duplicate split whose child masses sum to the parent's
mass. Among normalized rules with weights $w(m_i)K(s_i)$, where $K>0$ is fixed and
$w:(0,\infty)\to(0,\infty)$ is nondecreasing and depends only on local mass, invariance for every
integer equal split and every score/value configuration holds if and only if $w(m)=cm$, $c>0$.
\end{proposition}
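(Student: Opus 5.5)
The sufficiency direction is a direct computation, and the necessity direction reduces to a functional equation. For sufficiency, suppose token $i$ with mass $m_i$ is replaced by children $j\in J_i$ that carry identical scores $s_i$ and values $v_i$ and masses $m_{ij}$ with $\sum_j m_{ij}=m_i$. In the numerator of \eqref{eq:ours} the children contribute $\sum_j m_{ij}K(s_i)v_i=m_iK(s_i)v_i$. The denominator collapses in the same way. Both sums are therefore unchanged, so $A_m^K$ is preserved for any positive $K$. The same computation, with $w(m)=cm$, shows that the constant $c$ cancels in the normalization. This gives the ``if'' part of the uniqueness claim.

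For necessity, fix a nondecreasing $w>0$ and assume invariance under every integer equal split. I will use the two-token configuration of the minimal counterexample: token $1$ has mass $a$, value $0$ and score $s$, and token $2$ has mass $b$, value $1$ and score $s$. The output is $w(b)/(w(a)+w(b))$, since $K(s)>0$ cancels. Splitting token $2$ into $n$ children of mass $b/n$ gives $n\,w(b/n)/(w(a)+n\,w(b/n))$. The map $x\mapsto x/(w(a)+x)$ is strictly increasing on $(0,\infty)$, so equality of the two outputs forces
\[
 w(b)=n\,w(b/n)\qquad\text{for all } b>0,\ n\in\mathbb{N}.
\]
Only one configuration is needed. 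Writing $b=nx$, this says $w(nx)=n\,w(x)$. Combining the identities for $n$ and $q$ gives $w(\tfrac{p}{q}x)=\tfrac{p}{q}w(x)$ for all positive rationals $p/q$. Setting $c=w(1)>0$, we get $w(r)=cr$ for every positive rational $r$.

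The remaining step, and the one that needs care, is extending this from the rationals to all reals. Here monotonicity is used: the relation $w(nx)=n\,w(x)$ alone admits pathological Hamel-type solutions. Given real $m>0$, choose rationals $r<m<r'$. Because $w$ is nondecreasing, $cr=w(r)\le w(m)\le w(r')=cr'$. Letting $r\uparrow m$ and $r'\downarrow m$ gives $w(m)=cm$. Positivity of $w$ guarantees $c>0$, which completes the ``only if'' direction.
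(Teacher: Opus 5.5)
Your proof is correct and follows the paper's argument essentially step for step: sufficiency by additivity of the child masses in numerator and denominator, and necessity via a two-token configuration that forces $n\,w(b/n)=w(b)$, followed by rational homogeneity and a monotone squeeze between rational approximants. Your use of the injectivity of $x\mapsto x/(w(a)+x)$ spells out the paper's step that the children's total weight must equal the parent's, and your normalization $c=w(1)$ is the paper's $c=w(m_0)/m_0$ with $m_0=1$.
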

\end{samepage}

The sufficiency identity is the proportional-attention correction~\citep{bolya2023tome}.
Necessity follows from $k w(m/k)=w(m)$, rational homogeneity, and monotonicity; sufficiency is
additivity. Thus $w(m)=m^\alpha$ requires $\alpha=1$. This characterizes the weighting rule
within its stated local class (Appendix~\ref{app:splitting}). Task training need not identify
this exponent: row-constant biases cancel, and shared type biases can confound it
(Appendix~\ref{app:identification}).

\subsection{Natural retokenization as a coupled perturbation}\label{sec:stability}
To compare two partitions, we match the portions of the signal that their tokens represent.
The error separates changes in token values from changes in how attention allocates weight
across those portions. Let $m_i$ and $n_j$ have the same
total mass $M$. A \emph{mass coupling} $\gamma_{ij}\ge0$ has row sums $m_i/M$ and column sums
$n_j/M$. For two partitions of one physical domain, a canonical choice is
$\gamma_{ij}=\mu(C_i\cap D_j)/M$: it compares representations of the same physical points.

\begin{theorem}[Approximate refinement stability]\label{thm:stability}
Compare $A=A_m^K(s,v)$ and $B=A_n^K(t,w)$ with finite positive kernel values. On pairs with $\gamma_{ij}>0$, define
\[
 \epsilon_v=\max\|w_j-v_i\|,\qquad
 \Delta=\max[\log K(t_j)-\log K(s_i)]-\min[\log K(t_j)-\log K(s_i)],
\]
and let $D_v=\max_{i,i'}\|v_i-v_{i'}\|$. Then, for any norm,
\begin{equation}\label{eq:stability}
 \boxed{\ \|B-A\|\le\epsilon_v+D_v\tanh(\Delta/4).\ }
\end{equation}
The coefficient of $D_v$ is sharp. If paired log-kernels differ by at most $\epsilon_s$,
the second term is at most $D_v\tanh(\epsilon_s/2)$.
\end{theorem}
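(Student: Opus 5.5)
The plan is to lift both outputs to the coupling support, so that $A$ and $B$ become averages over the same index set $\{(i,j):\gamma_{ij}>0\}$. Write $a_i=m_iK(s_i)$ and $b_j=n_jK(t_j)$. Since the row sums of $\gamma$ are $m_i/M$, we have $a_i=M\sum_j\gamma_{ij}K(s_i)$. Hence $A=\sum_{ij}p_{ij}v_i$ with $p_{ij}\propto\gamma_{ij}K(s_i)$. By the column sums, $B=\sum_{ij}q_{ij}w_j$ with $q_{ij}\propto\gamma_{ij}K(t_j)$. Both $p$ and $q$ are probability vectors on the support. The likelihood ratio is $q_{ij}/p_{ij}=Z\,e^{\delta_{ij}}$, where $\delta_{ij}=\log K(t_j)-\log K(s_i)$ and $Z$ is a normalizing constant. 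By definition, the oscillation of $\delta$ over the support is exactly $\Delta$.

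Next, split $B-A=\sum q_{ij}(w_j-v_i)+\big(\sum q_{ij}v_i-\sum p_{ij}v_i\big)$. The first term is a convex combination of vectors of norm at most $\epsilon_v$, so the triangle inequality bounds it by $\epsilon_v$. The second term is $\sum_{ij}(q_{ij}-p_{ij})v_i$. Because $q-p$ sums to zero, this equals $\sum(q-p)_{ij}(v_i-c)$ for any vector $c$. Let $P=\sum_{q>p}(q-p)$ be the total variation distance. The positive part of $q-p$ divided by $P$ and the negative part divided by $P$ are both probability vectors. So the difference is $P$ times a difference of two convex combinations of the $v_i$. Both combinations lie in the convex hull of $\{v_i\}$, whose diameter in any norm is $D_v$. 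This gives $\|\cdot\|\le P\,D_v$.

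The main step, and the one I expect to be the real obstacle, is the bound $\mathrm{TV}(p,q)\le\tanh(\Delta/4)$ whenever $\log(q/p)$ has oscillation at most $\Delta$. I would reduce it as follows. Let $r=dq/dp$ take values in $[\lambda,\lambda e^\Delta]$ with $\E_p r=1$. Then $\mathrm{TV}=\E_p(r-1)_+$, which is a convex function of the distribution of $r$ under $p$. Its maximum is therefore attained by a two-point distribution on the endpoints. With mass $\theta$ at $\lambda e^\Delta$ and $1-\theta$ at $\lambda$, the constraint gives $\lambda=1/(1+\theta(e^\Delta-1))$ and $\mathrm{TV}=\theta(1-\lambda)$. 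Maximizing over $\theta$ should give $\theta^*=1/(e^{\Delta/2}+1)$ and value $(e^{\Delta/2}-1)/(e^{\Delta/2}+1)=\tanh(\Delta/4)$. I would verify this calculus carefully.

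For sharpness, I would take two tokens with values $0$ and $v$, identity coupling, and $m=n$. Choose scores so that the second weight is multiplied by $e^{\Delta/2}$ and the first by $e^{-\Delta/2}$, starting from equal weights. Then $\epsilon_v=0$ and $\|B-A\|=D_v\tanh(\Delta/4)$, so the coefficient cannot be improved. Finally, if $|\delta_{ij}|\le\epsilon_s$, then $\Delta\le2\epsilon_s$, which gives the $\tanh(\epsilon_s/2)$ form.
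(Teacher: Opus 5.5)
Your lift to the support of $\gamma$, the split of $B-A$, and the convex-hull bound on the reallocation term are exactly the paper's proof. The genuine gap is the sharpness example, which does not attain the bound. Starting from equal weights and tilting by $e^{-\Delta/2},e^{\Delta/2}$ gives attention $(1,e^{\Delta})/(1+e^{\Delta})$, so
\[
\|B-A\|=\|v\|\Bigl(\frac{e^{\Delta}}{1+e^{\Delta}}-\frac12\Bigr)=\tfrac12\tanh(\Delta/2)\,D_v=\frac{\tanh(\Delta/4)}{1+\tanh^2(\Delta/4)}\,D_v .
\]
This is strictly below $D_v\tanh(\Delta/4)$ for every $\Delta>0$, and it tends to $D_v/2$ rather than $D_v$ as $\Delta\to\infty$. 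It therefore only shows that the coefficient is at least $\tfrac12\tanh(\Delta/2)$.

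Your own optimization gives the fix: the token whose kernel is raised must start with attention $\theta^*=1/(1+e^{\Delta/2})$, not $1/2$. The paper uses baseline probabilities $(e^{\Delta/2},1)/(1+e^{\Delta/2})$ on values $0,1$ with a diagonal coupling. The score changes $-\Delta/2,+\Delta/2$ then swap the two probabilities. The output moves by exactly $(e^{\Delta/2}-1)/(e^{\Delta/2}+1)=\tanh(\Delta/4)$, with $\epsilon_v=0$ and $D_v=1$.

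The total-variation lemma needs two smaller repairs. First, with mass $\theta$ at $\lambda e^{\Delta}$, the total variation is
\[
(1-\theta)(1-\lambda)=\theta(\lambda e^{\Delta}-1)=\theta(1-\theta)(e^{\Delta}-1)/(1+\theta(e^{\Delta}-1)),
\]
not $\theta(1-\lambda)$. Your expression increases to $1-e^{-\Delta}$ as $\theta\to1$, where the true value is $0$. Your $\theta^*$ and the value $\tanh(\Delta/4)$ are correct for the corrected expression.

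Second, the endpoints are extremal because $x\mapsto(x-1)_+$ is convex in $x$, not because the functional is convex in the law of $r$. A functional that is linear in the law is maximized at two-point laws, but these need not sit at the endpoints. The right step is this: on $[\lambda,\lambda e^{\Delta}]$ the function $(x-1)_+$ lies below its chord, so $\E_p(r-1)_+$ is at most the chord evaluated at $\E_p r=1$. With that step, your reduction is Lemma~\ref{lem:tilt} in normalized coordinates. The paper fixes the endpoints $a,b$ and maximizes over $u=\E_Pf$, with the maximum at $u=\sqrt{ab}$; you fix the mean at $1$ and maximize over $\lambda$.
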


$\Delta$ measures the spread of paired log-kernel changes; a common shift contributes zero.
The proof lifts both attention distributions to the same coupling. The coupled likelihood ratio
has logarithmic range $\Delta$, and a sharp Hilbert-distance inequality then bounds total
variation by $\tanh(\Delta/4)$; we give a proof in Appendix~\ref{app:tilt}, and the
inequality was previously obtained by \citet{cohen2023hilbert}. The two terms
account for changed representations and changed allocation, with no explicit token-count factor. The coupling specifies which features
represent the same physical region even when neither partition refines the other.

The bound covers changed token content through paired feature and score errors. Exact copies give $\epsilon_v=\Delta=0$. For point evaluations of Lipschitz value and
log-kernel fields, paired points within distance $r$ give
$\epsilon_v\le L_vr$ and $\Delta\le2L_sr$. Cell diameters and encoder approximation errors therefore
control distinct parts of the discrepancy.

\paragraph{Propagation through a network.}
Layerwise discrepancies propagate through the Lipschitz constants of subsequent blocks
(Appendix~\ref{app:network_bound}, Corollary~\ref{cor:network}). The network bound uses constants for complete blocks, including position, masks,
normalization, and residual branches; its magnitude depends on those constants. A readout logit error in $\ell_\infty$ below half the reference top-two margin preserves argmax.

\subsection{Sampling density and operator consistency}\label{sec:attn}
Mass quadrature also converges to the normalized continuum integral $\mathcal A_\mu^K$ when
scores and values are continuous, $K(s)>0$ is continuous, and cell diameters vanish on a compact
domain~\citep{calvello2025continuum,berner2026operators}.
If representative points instead converge to a nonuniform sampling density $\rho$ relative to
$\mu$, standard attention converges to $\mathcal A_{\rho\mu}^K$. Its bias is
$\Cov_\pi(v,\rho)/\E_\pi[\rho]$, where $d\pi\propto K(s)\,d\mu$.
Thus finer sampling need not remove an allocation error: the limit depends on whether density
correlates with values under the intended attention measure
(Appendix~\ref{app:quadrature}, Figure~\ref{fig:convergence}).

\section{Measure weighting in multimodal transformers}\label{sec:method}
\subsection{Reference calibration across modalities}\label{sec:gauge}
Refinement fixes how mass enters aggregation; modality priors and feature fidelity remain
separate choices. Audio and video measures have different units. Choose a measure $\mu_\tau$
and a positive scale $\kappa_\tau$ for each modality, giving token $i$ the effective mass
$m_i=\kappa_{\tau(i)}\mu_{\tau(i)}(C_i)$. Split invariance fixes linear dependence on cell
measure; the constants $\kappa_\tau$ set the relative modality prior. A common rescaling of all
co-visible types cancels, whereas changing one type's scale changes its odds against the others
(Appendix~\ref{app:calibration}).

For a checkpoint with uniform native cell measures $m_\tau^0$, choosing $\kappa_\tau=1/m_\tau^0$ gives
$\widetilde m_i=\mu_{\tau(i)}(C_i)/m_{\tau(i)}^0$. Fix these reference values before intervention.
Every native bias is then zero in exact arithmetic. Text and auxiliary keys outside the intervention
retain unit reference mass. This preserves their relative weight at calibration; recalibrating after
each tokenization would generally change it. Appendix~\ref{app:reference_impl} specifies the reference conventions.
Mass normalization of token losses is distinct from learning task weights (Appendix~\ref{app:loss}).

\subsection{Position and support geometry}\label{sec:rope}
We evaluate rotary position embeddings (RoPE; \citealp{su2021rope}) at physical coordinates.
The primary deployment comparison holds point positions fixed across weighting rules.
Support averaging is evaluated separately in the controlled temporal and pretrained-support studies.
For a support distribution $p$, let $U(p)=\mathbb E_{X\sim p}R(X)$ be the averaged rotary matrix.
Independent query and key supports give the positional logit $q^\top U(p_q)^\top U(p_k)k$.
This averages the logit, not its nonlinear softmax. Uniform intervals give a sinc attenuation;
variance-matched Gaussians agree to second order (Appendix~\ref{app:position},
Figure~\ref{fig:position}). Temporal quantization is tested in Appendix~\ref{app:lattice}.

For general supports, the change in the averaged rotary matrix is at most
$\min\{2,\omega_{\max}W_1(p,p')\}$ in Euclidean operator norm. Here $\omega_{\max}$ is the
largest angular frequency and $W_1$ the minimum expected support displacement over couplings.
This connects geometry to the score term of Theorem~\ref{thm:stability}; equal mass alone does
not control it (Appendix~\ref{app:support_bound}).

\subsection{Visibility and overlapping coverage}\label{sec:coverage}
For a fixed physical query $q$ with positive-mass visible region $V(q)$, use the visible mass
$m_i(q)=\mu(C_i\cap V(q))$. With copied scores and values, additivity preserves splits even when a causal boundary cuts a
cell. Copies with different token-index prefixes do not represent this same query.
For overlapping coverage, nonnegative functions $\psi_i$ with $\sum_i\psi_i=1$ assign mass
$\int_{V(q)}\psi_i\,d\mu$ to token $i$; zero visible masses are omitted. This integrates a
mixture of token kernels, retaining disagreements between overlapping features
(Appendix~\ref{app:coverage}; multiscale probe in Appendix~\ref{app:multiscale}).

\section{Experiments}\label{sec:experiments}
\subsection{Evaluation design}\label{sec:setup}
We distinguish attention allocation, answer identity, and task accuracy. Released-model probes use
Qwen2.5-Omni-7B on MVBench, with Audio-Flamingo-3 for controlled representation probes and
Qwen2.5-Omni-3B for a scale comparison~\citep{xu2025qwen25omni,goel2025af3,li2024mvbench}.
Trained deployment evaluations use MVBench and WorldSense~\citep{hong2026worldsense}.
Known-operator and temporal studies isolate approximation error and position.

Training seeds are the independent units for trained comparisons; inference comparisons pair
questions and cluster by source video where available. Repeated rates remain within these units.
We report 95\% intervals and control multiplicity within the 54-contrast family of controlled
studies and four prespecified deployment contrasts per benchmark; the controlled-study family is retrospective. Cellwise and
additional-budget comparisons are descriptive unless stated otherwise
(Appendices~\ref{app:estimands}, \ref{app:multiplicity}, and~\ref{app:exclusions}). Exact identities use
primitive and floating-point controls (Appendices~\ref{app:primitives} and~\ref{app:repro}).

\subsection{Content-preserving duplication}\label{sec:released}
We isolate multiplicity by duplicating a contiguous half of the visual representations while
preserving embeddings, positions, and incoming visibility. Standard attention changes 150 answers at split factor two and 255 at factor three;
measure weighting preserves every answer (Table~\ref{tab:duplication}). The answer changes include both gains and losses in
correctness.
When copies instead have different causal prefixes, measure weighting changes four of 720 answers
(0.56\%); those prefixes violate the exact symmetry's visibility condition.

For example, a recorded action-prediction question asks, ``What will the person do next?''
The native answer is the correct ``Put down the bag.'' Duplicating 512 of its 1,024 visual tokens
changes standard attention's answer to ``Take the food'' at both factors; measure weighting
keeps the correct answer. This instance illustrates a loss, while Table~\ref{tab:duplication}
reports both directions across all questions.

\begin{table}[!htbp]
\centering\small
\caption{\textbf{Same content, changed predictions.} Qwen2.5-Omni-7B on 3,586 usable MVBench questions (14 of 3,600 excluded). Half the visual tokens are replaced by the stated number of copies, with matched positions and visibility. C/W denote correct/incorrect; transitions describe standard attention. Measure weighting changes no answers.}
\label{tab:duplication}
\begin{tabular}{@{}rrrrrr@{}}
\toprule
Split & Standard changes & Measure changes & C$\to$W & W$\to$C & W$\to$W \\
\midrule
2 & 150 & 0 & 63 & 55 & 32 \\
3 & 255 & 0 & 111 & 90 & 54 \\
\bottomrule
\end{tabular}
\end{table}

Full-modality duplication by $k=2,3,4$ multiplies standard cross-modal attention odds by $k$ in
the float32 probe. Measure-weighted logit deviations remain at the measured arithmetic floor.
Audio-Flamingo-3 reproduces the odds identity on 40 independent representation draws;
Qwen2.5-Omni-3B shows the same answer-preservation pattern on the 3,586-question panel.
Appendices~\ref{app:duplication} and~\ref{app:qwen3b} specify these interventions.

\paragraph{Local allocation at fixed budget.}
A separate control allocates the same 16 frames unequally to two equal-duration halves of each
clip. For dense/sparse allocations $10/6$, $12/4$, and $14/2$, measure weighting scales the
first-layer dense-to-sparse attention odds by $6/10$, $4/12$, and $2/14$, respectively;
a single per-modality count correction leaves those odds unchanged. Of 60,480
clip--head--allocation observations (720 clips, 28 heads), 60,479 agree within 1\%; the exception
has extreme baseline odds. This identifies the allocation effect in the shared-score first
layer (Appendix~\ref{app:unequal}).

\subsection{Natural frame resampling}\label{sec:natural}
Natural frame resampling changes both token count and content. We test its effect on prediction
stability using a shared reference distribution. On 540 MVBench questions from 534 source videos, frame budgets 4, 8, 24, 32, and 48
use the shared native 16-frame option distribution as reference. The endpoint is
$\operatorname{KL}(p_{16}\|p_b)$, averaged over the five nonreference budgets within question. Measure weighting reduces mean reference-answer
Kullback--Leibler (KL) divergence by \BudgetKL\ nats (95\% video-clustered interval
$[0.0077,0.0259]$; Bonferroni-adjusted $p=0.016$; Figure~\ref{fig:budgets}).
The paired accuracy difference is $-0.30$\,pp with interval $[-1.16,+0.56]$; answer-flip
differences are also unresolved. Calibrated count and measure weighting coincide under these
uniform budget changes (Appendix~\ref{app:frame_budgets}). Qwen2.5-Omni-3B has a descriptive KL reduction of 0.0154 nats
(Appendix~\ref{app:qwen3b}).

\begin{figure}[!htbp]
\centering\includegraphics[width=\textwidth]{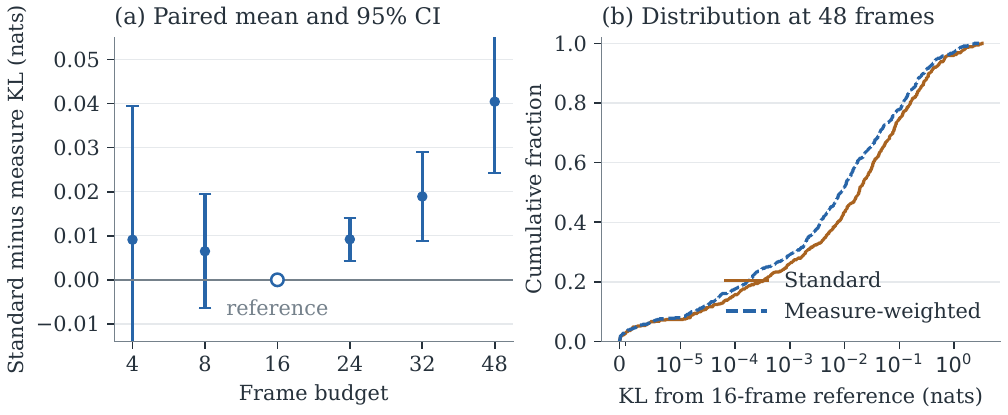}
\caption{\textbf{Distributional stability under natural resampling.} Qwen2.5-Omni-7B on MVBench.
(a) Standard minus measure-weighted $\operatorname{KL}(p_{16}^{\rm std}\|p_b^a)$ from the
native 16-frame option distribution, with paired 95\% intervals over 540 questions; the open
marker is the reference budget, zero by construction. (b) Empirical cumulative distributions
at 48 frames; the horizontal axis is linear near zero and logarithmic beyond $10^{-5}$ nats.
Per-budget intervals describe question variability; the primary repeated-budget test clusters by source video.}
\label{fig:budgets}
\end{figure}

\subsection{Adaptive merging at deployment}\label{sec:deployment}
The deployment test evaluates whether a single trained model remains correct across four
partitions of a frozen video encoding. Each clip has 16 native temporal patches, grouped into
$K\in\{8,4,2\}$ contiguous groups by uniform, motion, entropy, or keyframe boundaries. All groups
cover the same native patches and use mass-weighted feature averages. Three rules evaluate the
same grouped features and physical positions: standard attention adds no bias; global count
weighting adds $\log(16/K)$ to every visual key; measure weighting adds $\log|G_i|$ for a
group containing $|G_i|$ native patches. Thus measure weighting here is precisely groupwise
proportional attention~\citep{bolya2023tome}; the count comparator uses the average group size.
The 18-task MVBench panel evaluates five checkpoints, each trained at the native partition with
rank-8 low-rank adaptation (LoRA; 400 steps, 696 training questions), on 2,489 questions from 2,290 source videos.
Training and test sources are disjoint (Appendix~\ref{app:deployment_setup}); an untrained
support comparison appears in Appendix~\ref{app:pretrained_support}.

Robust accuracy is the fraction of questions answered correctly under all four evaluated
partition rules; it differs from the minimum of the four partition-averaged accuracies.

\begin{table}[!htbp]
\centering\small
\caption{\textbf{Robustness under post-encoder merging.} Qwen2.5-Omni-7B on 2,489 MVBench and 2,188 WorldSense questions; means $\pm$ sample SD over five training seeds. Mean accuracy averages uniform, motion, entropy, and keyframe partitions; robust accuracy requires correctness under all four (both in \%). JSD is mean pairwise Jensen--Shannon divergence in nats. Bold marks the best mean within each benchmark and budget; paired inference is in Table~\ref{tab:deployment-inference}.}
\label{tab:prospective}
\setlength{\tabcolsep}{4pt}
\begin{tabular}{@{}llrrr@{}}
\toprule
Reduction & Rule & Mean accuracy $\uparrow$ & Robust accuracy $\uparrow$ & JSD $\downarrow$ \\
\midrule
\multicolumn{5}{@{}l}{\emph{MVBench}} \\
2$\times$ & Standard & $64.13\pm0.65$ & $56.50\pm0.64$ & $0.0360\pm0.0046$ \\
 & Global count & $64.64\pm0.54$ & $56.39\pm0.48$ & $0.0413\pm0.0051$ \\
 & Measure & $\mathbf{64.82}\pm0.58$ & $\mathbf{57.44}\pm0.60$ & $\mathbf{0.0350}\pm0.0044$ \\
\midrule
4$\times$ & Standard & $61.26\pm0.59$ & $53.08\pm0.61$ & $\mathbf{0.0409}\pm0.0051$ \\
 & Global count & $\mathbf{62.41}\pm0.56$ & $52.53\pm0.82$ & $0.0512\pm0.0052$ \\
 & Measure & $62.32\pm0.42$ & $\mathbf{53.35}\pm0.44$ & $0.0456\pm0.0047$ \\
\midrule
8$\times$ & Standard & $57.49\pm0.34$ & $\mathbf{51.60}\pm0.48$ & $\mathbf{0.0272}\pm0.0033$ \\
 & Global count & $\mathbf{58.27}\pm0.73$ & $50.49\pm0.65$ & $0.0416\pm0.0050$ \\
 & Measure & $58.23\pm0.70$ & $50.54\pm0.84$ & $0.0387\pm0.0052$ \\
\midrule
\multicolumn{5}{@{}l}{\emph{WorldSense: primary budget}} \\
2$\times$ & Standard & $38.45\pm0.53$ & $31.52\pm0.93$ & $0.0333\pm0.0040$ \\
 & Global count & $38.55\pm0.67$ & $30.99\pm1.11$ & $0.0380\pm0.0039$ \\
 & Measure & $\mathbf{38.56}\pm0.59$ & $\mathbf{31.73}\pm0.89$ & $\mathbf{0.0331}\pm0.0040$ \\
\bottomrule
\end{tabular}
\end{table}

On MVBench, all rules coincide at the native partition, with 66.83\% accuracy. At the primary budget $K=8$,
measure weighting has the highest mean and all-partition-correct accuracy and the lowest
pairwise Jensen--Shannon divergence (JSD; Table~\ref{tab:prospective}).
Robust accuracy rises by 1.04\,pp over global count weighting and
0.93\,pp over standard attention. Four prespecified contrasts test robust accuracy and JSD
against the two comparators within each benchmark. Each uses the larger of a source-video sign-flip $p$-value and a
five-seed paired $t$-test $p$-value, with four-test Bonferroni correction. On MVBench, three resolve;
the JSD contrast against standard attention does not (Table~\ref{tab:deployment-inference}).
Appendix~\ref{app:deployment_inference} gives the procedure. The 0.69\,pp mean-accuracy
increase over standard attention is descriptive.

\begin{table}[!htbp]
\centering\small
\caption{\textbf{Primary paired effects at twofold reduction.} Qwen2.5-Omni-7B, five trained seeds per benchmark. Robust accuracy requires correctness under all four partitions. Effects pair the same checkpoints and test questions. Differences are measure minus comparator, computed before rounding: robust accuracy uses pp; Jensen--Shannon divergence (JSD) uses nats. Intervals are descriptive, unadjusted 95\% crossed source-video/seed bootstrap intervals; adjusted $p=\min(1,4\max(p_{\rm source},p_{\rm seed}))$ within each benchmark.}
\label{tab:deployment-inference}
\begin{tabular}{llrr}
\toprule
Comparator & Endpoint & Difference [95\% interval] & Adjusted $p$ \\
\midrule
\multicolumn{4}{l}{\emph{MVBench: 2,489 questions, 2,290 source videos}} \\
Global count & Robust acc. & +1.04 [+0.45, +1.65] & 0.0082 \\
Global count & JSD & -0.0063 [-0.0080, -0.0047] & $7.7\times10^{-4}$ \\
Standard & Robust acc. & +0.93 [+0.16, +1.69] & 0.024 \\
Standard & JSD & -0.0010 [-0.0029, +0.0008] & 0.90 \\
\midrule
\multicolumn{4}{l}{\emph{WorldSense: 2,188 questions, 1,158 source videos}} \\
Global count & Robust acc. & +0.74 [+0.26, +1.25] & 0.016 \\
Global count & JSD & -0.0049 [-0.0059, -0.0040] & $2\times10^{-4}$ \\
Standard & Robust acc. & +0.21 [-0.40, +0.82] & 1.0 \\
Standard & JSD & -0.0002 [-0.0014, +0.0010] & 1.0 \\
\bottomrule
\end{tabular}
\end{table}

\paragraph{Where the benefit occurs.}
Uniform groups give measure and global count weighting identical biases. Their difference is
therefore carried by adaptive partitions. At $K=8$, the keyframe partition is the hardest for
all rules; measure weighting raises its accuracy from 63.03\% (standard) and 63.78\% (global
count) to 64.07\% (Appendix~\ref{app:perpartition}). At fourfold reduction, the robust-accuracy
lead persists descriptively. At eightfold reduction, standard attention has higher robust
accuracy and lower JSD, while mean accuracies remain within 0.78\,pp. At deeper compression, the shared merged features depart further from the native representation;
Appendix~\ref{app:mechanism} reports the accompanying embedding and attention-allocation diagnostics.

The same protocol on WorldSense uses five separately trained checkpoints and 2,188 test
questions from 1,158 videos. Against global count weighting, robust accuracy rises by
0.74\,pp (adjusted $p=0.016$) and JSD falls by 0.0049 nats ($p=2\times10^{-4}$).
Neither contrast against standard attention resolves
(Appendix~\ref{app:worldsense}). The cross-benchmark evidence is therefore strongest for using
local inherited mass over one average mass for all groups. Exploratory calibration,
frame selection, fine-tuning, and cross-rate results appear in
Appendices~\ref{app:ws_calibration}, \ref{app:adaptive}, \ref{app:trained}, and~\ref{app:deployment}.

\paragraph{Computational cost.}
On one NVIDIA H20, eightfold merging reduces mean decoder tokens from 1,102 to 206 and decoder-forward
latency from 0.29--0.30\,s to 0.073\,s, a 3.9--4.2-fold acceleration across rules.
Summed pipeline-stage time falls from 4.69--4.71\,s to 4.44\,s (5.2--5.6\%); peak allocated
memory remains 19.34\,GiB because the frozen dense encoder and acquisition costs remain.
These savings arise from compression and are shared by all weighting rules
(Table~\ref{tab:deployment-cost}).

\subsection{Operator error under adaptive partitions}\label{sec:cpu}
We next evaluate discretization error under partitions whose cell contents change.
We evaluate 512 periodic signals under uniform, activity-driven, and variation-driven partitions
at budgets $N\in\{16,32,64,128\}$. Tokens average the same signal within each cell; a fixed
nonlinear score/value map defines attention, compared with a 4,096-cell numerical reference.
This isolates discretization error without training or benchmark labels.

At $N=32$, mean absolute error falls from 0.1524 to 0.0076 for activity-driven partitions and
from 0.1005 to 0.0020 for variation-driven partitions when measure weighting replaces count
weighting (Figure~\ref{fig:stability}). Uniform partitions give the same output under both rules.
All 6,144 field--partition--budget comparisons satisfy the coupled bound. A two-value construction attains the worst-case constant; the signal bounds are conservative.
As resolution increases, measure-weighted error decreases; the count-weighted errors remain near 0.15
and 0.10 on the two adaptive families. Appendices~\ref{app:cpu_generator}, \ref{app:cpu_results}, and~\ref{app:cpu_checks} give
the generator, outcomes, and numerical checks.

\begin{figure}[!htbp]
\centering\includegraphics[width=\textwidth]{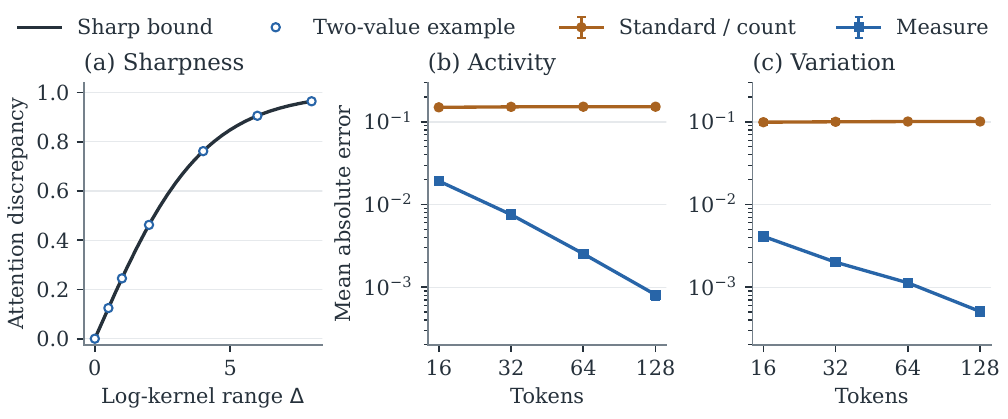}
\caption{\textbf{Exact worst-case bound and natural partition error.}
(a) A two-value construction attains $D_v\tanh(\Delta/4)$ with $D_v=1$ and no value perturbation.
(b,c) Mean absolute error against the dense reference over 512 periodic signals, with 95\% intervals
across signals. Standard and count weighting coincide. Adaptive cells change their contents
as they change width; measure weighting removes the count prior while retaining approximation error.}
\label{fig:stability}
\end{figure}

\subsection{Rate transfer and positional ablations}\label{sec:synthetic}
This study separates rate imbalance from positional representation in a controlled temporal task.
The task asks a four-layer, 1.77M-parameter model to predict the gap between events in
two modalities. Both arms use continuum RoPE and train at rate ratios one and two. Evaluation crosses four slow
rates with six ratios up to 32. Across ten paired seeds, 17 of 20 adjacent steps in the mean gain increase with ratio
(Figure~\ref{fig:factorial}). The seed-level trend has nominal $p=0.00195$, above the family threshold. An independent thirty-seed panel shows 19 of 20 ascents (nominal $p=9.3\times10^{-10}$) and a ratio-32 gain of
16.8\,pp with 95\% interval $[13.5,20.2]$
(Appendix~\ref{app:factorial_replication}). In the ten-seed panel, the equal-rate gain is $+0.09$\,pp
with 95\% CI $[-2.20,+2.38]$\,pp.

\begin{figure}[H]
\centering\includegraphics[width=\textwidth]{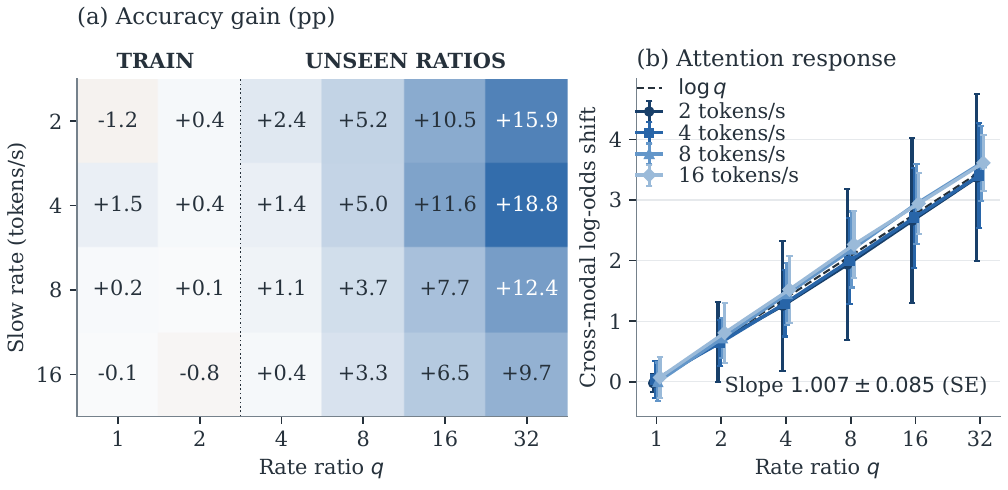}
\caption{\textbf{Transfer to unseen rate imbalance.} The task predicts the temporal gap between
events in two modalities. (a) Measure minus standard accuracy (pp), with continuum RoPE shared
by both arms, across four slow rates and six rate ratios; cells average ten paired seeds.
The dotted divider separates the two training ratios from unseen ratios. (b) Cross-modal
log-odds shift against rate ratio, with 95\% intervals over six seeds. The dashed line is the
fixed-score prediction $\log q$; trained arms may also differ in their scores.}
\label{fig:factorial}
\end{figure}

For token rates $r_{\rm fast}$ and $r_{\rm slow}$, let $q=r_{\rm fast}/r_{\rm slow}$.
The fixed-score prediction for the cross-modal log-odds shift is $\log q$. The measured slope
between trained arms is $1.007\pm0.085$ (standard error; six seeds). Sequence length follows from both rate variables. Appendices~\ref{app:synthetic_setup} and~\ref{app:factorial} give the setup and cellwise intervals.

\textbf{Position is a separate mechanism.} In the cross-rate component experiment, physical
coordinates raise held-out accuracy from approximately 20\% chance to 83.2\%; adding measure weighting
gives 83.4\% with no resolved increment. A shared physical clock is competitive.
In a separate eight-seed standard-attention comparison, models trained without support
attenuation score 11--21\,pp below models trained with width-tracking Gaussian attenuation at
merge factors two, four, and eight; Gaussian and box profiles have no resolved difference
(Appendices~\ref{app:transfer} and~\ref{app:merge}, Figures~\ref{fig:transfer} and~\ref{fig:merge}).

\section{Related work}\label{sec:related}
\textbf{Attention as weighted integration.} Token Merging adds the logarithm of each merged
patch's multiplicity to its score~\citep{bolya2023tome}. This is exactly measure weighting when
base patches have equal mass, including nonuniform groups. Continuum
Attention~\citep{calvello2025continuum}, neural operators~\citep{li2021fno,berner2026operators},
and Attention on the Sphere~\citep{bonev2025sphere} use function-space or quadrature formulations.
We characterize the finite split requirement and bound perturbations away from exact copies.

\textbf{Stability under representation changes.} Measure-based attention regularity and
Lipschitz analysis compare different sequence lengths and give token-count-independent bounds
on bounded feature domains~\citep{vuckovic2021regularity,castin2024smooth}.
Theorem~\ref{thm:stability} applies the sharp Hilbert-distance inequality of
\citet{cohen2023hilbert} to a physical mass coupling, separating value change from
reallocation through supportwise log-kernel oscillation, without an absolute normalizer bound. Head-wise correction addresses feature distortion left by token
merging~\citep{ichikawa2026hac}; our bound likewise retains this distortion after mass correction.
The composition result states the additional assumptions needed at network level.

\textbf{Position and compression.} Multidimensional rotary
coordinates~\citep{wang2024qwen2vl,ndrope2026} and Qwen2.5-Omni's temporal
alignment~\citep{xu2025qwen25omni} supply relevant physical-clock baselines; integrated Fourier
features~\citep{barron2021mipnerf} motivate averaging positional features over support.
Variable byte patches~\citep{pagnoni2024blt} and packing variable-resolution images into
fixed-length sequences~\citep{dehghani2023navit} illustrate distinct forms of variable token budgets.
Vision--language compression selects tokens by attention~\citep{chen2024fastv}, similarity
or clustering~\citep{tran2024pitome,yang2025visionzip,shang2025prumerge}, or hierarchical video
structure~\citep{li2025videochatflash}. These choices determine which features remain.
Refinement analysis asks how their inherited mass should enter aggregation and what error remains
when the features or their support change.

\section{Conclusion}\label{sec:conclusion}
Refinement symmetry separates represented information from token multiplicity. It characterizes
mass-proportional aggregation through finite invariance and bounds attention error when content and
support change. With matched content, position, and visibility, measure weighting preserves
predictions under duplication. At twofold merging, it improves all-partition-correct accuracy over
global count on both benchmarks, connecting the representation requirement to partition robustness.

\section*{Limitations}\label{sec:scope}
Exact invariance applies to copied content, position, and visibility; other retokenizations call
for perturbation analysis. Applying the depth bound to a pretrained decoder requires measuring
its blockwise constants. Deployment covers two benchmarks with frozen encoders and four partition
rules; extending to adaptive encoding requires measuring its feature changes. Robustness gains
depend on compression budget. Further pipeline savings require reducing the dominant encoder
and acquisition costs. Relative modality scales remain task-dependent modeling choices;
native-reference calibration preserves a checkpoint's prior at its native partition.
\label{page:endbody}

\clearpage
\section*{Reproducibility statement}
The appendices provide full proofs, implementation conventions, experimental protocols, and
statistical procedures. The source archive contains all figures and tables and the LaTeX files needed to compile
the manuscript. The ancillary supplement contains saved numerical results and figure data,
figure-rendering code, and deterministic summary checks, with source hashes linking the
reported observations.
\section*{AI use statement}
Generative AI assisted code development, manuscript writing, mathematical and reference checks, the authors reviewed and verified all AI-assisted work.
\bibliographystyle{iclr2027_conference}
\bibliography{refs}
\appendix
\raggedbottom
\section{Proofs and mathematical scope}\label{app:proofs}
\subsection{Refinement criterion}\label{app:criterion}
Fix a finite measure $\mu$. A partition represents a field through its cellwise values; two
representations are equivalent when their reconstructed fields agree almost everywhere. We allow
arbitrary measurable subdivisions with positive measure and discard null cells. If a restricted
family of tokenizers is used, it must admit a common refinement for each equivalent pair for the
necessity direction below to apply.

\begin{proposition}[Refinement criterion]\label{prop:classify}
For partition families closed under common refinement, a reduction is unchanged under every
content-preserving refinement if and only if it factors through the represented pair $(\mu,h_P)$.
A layer returning a field on its input partition is refinement-covariant if and only if its
reconstructed output depends only on $(\mu,h_P)$.
\end{proposition}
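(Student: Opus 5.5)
The proof is a quotient argument. A content-preserving refinement is an equivalence between representations: $(P,h)\sim(P',h')$ when the reconstructed fields $\sum_{C\in P}h_C\mathbf 1_C$ and $\sum_{D\in P'}h'_D\mathbf 1_D$ agree $\mu$-almost everywhere. The represented pair $(\mu,h_P)$ is exactly the class of a representation under this equivalence, where $h_P$ is the reconstructed field modulo null sets. The statement therefore says that refinement invariance is the same as invariance on equivalence classes.

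\emph{Sufficiency.} Suppose the reduction $R$ factors as $R(P,h)=F(\mu,h_P)$. If $(P',h')$ is a content-preserving refinement of $(P,h)$, each child cell carries its parent's content. Hence $h'_{P'}=h_P$ almost everywhere, the measure is unchanged, and $R(P',h')=R(P,h)$. Null cells may be discarded because they do not alter $h_P$ as an element of $L^0(\mu)$.

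\emph{Necessity.} Assume $R$ is unchanged under every content-preserving refinement, and take two representations with the same represented pair, $h_P=h'_{P'}$ almost everywhere. Closure under common refinement supplies a partition $Q$ refining both $P$ and $P'$; I use the intersections $C\cap D$ with null pieces discarded. On a non-null cell $E=C\cap D$, the constants $h_C$ and $h'_D$ both equal the field on a positive-measure set, so they coincide. The refinement of $P$ to $Q$ and the refinement of $P'$ to $Q$ therefore produce the same pair $(Q,g)$. Invariance applied twice gives $R(P,h)=R(Q,g)=R(P',h')$. So $R$ is constant on classes and is well defined as $F(\mu,h_P)$. If the family is restricted, this is precisely where the stated hypothesis enters: the common refinement must belong to the family, and it need not be $C\cap D$.

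\emph{Covariant layers.} A layer maps $(P,h)$ to an output field on the same partition. It is refinement-covariant when refining the input and then applying the layer gives, after reconstruction, the same field as applying the layer and copying parent outputs to children. This is the convention of Section~\ref{sec:covariance}. I apply the scalar argument to the reconstructed output, viewed as a map into $L^0(\mu)$-valued fields. Covariance says the reconstructed output is invariant under content-preserving refinement. By the argument above, this holds if and only if the reconstructed output factors through $(\mu,h_P)$.

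For the converse direction of this part, suppose the reconstructed output depends only on $(\mu,h_P)$. Refining the input then leaves the reconstructed output unchanged. An output that is piecewise constant on $P'$ and reconstructs to the same field as the original output must equal, on each non-null child, its parent's value. So the children receive copies of the parent outputs, which is covariance.

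\emph{Main obstacle.} The hardest step is the measure-theoretic bookkeeping in necessity. One must check that discarding null cells and identifying fields almost everywhere is compatible with the common refinement. One must also check that the reduction's assumed invariance covers arbitrary measurable subdivisions, including those that cut cells along null-boundary sets. I would handle this by defining everything on equivalence classes modulo null sets from the start.
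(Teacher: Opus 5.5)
Your proposal is correct and follows essentially the same route as the paper. You refine two equivalent representations to the common refinement of positive-measure intersections $C\cap D$, note that their copied contents agree there, apply invariance twice, and then rerun the argument on the reconstructed output field for the layer statement. Your explicit check that a piecewise-constant output reconstructing to the same field must copy the parent value on every non-null child spells out a step the paper leaves implicit in its converse.
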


\begin{proof}[Proof of Proposition~\ref{prop:classify}]
Let $(P,h_P)$ and $(Q,h_Q)$ represent the same field. Their common refinement consists of the
positive-measure intersections $C\cap D$, $C\in P$, $D\in Q$. On each such intersection, the two
content values agree almost everywhere. A refinement-invariant reduction therefore gives the same
result on $P$, on the common refinement, and on $Q$. Its value defines a well-defined function of
$(\mu,h)$ on the set of representable fields. Conversely, refinement preserves $(\mu,h)$, so any
function of that pair is invariant.

For a layer, reconstruct its output as a field. Covariance says that the output on a refinement
restricts to the parent's output on every child. The two reconstructed outputs therefore agree on
the common refinement and hence almost everywhere on the domain. This defines the factored output
field. Conversely, if reconstructed outputs depend only on $(\mu,h)$, they agree before and after
refinement, which is precisely the stated covariance relation.
\end{proof}

The criterion concerns fields representable by the allowed partitions. Its layer statement concerns
operators whose outputs are constant on each input cell. For ordered sequences, cell-centroid features,
and changing causal horizons, the representation and refinement relation must also include that
structure.

If $F$ and $G$ are covariant layers on compatible representations, then
$F(G(RX))=F(RG(X))=RF(G(X))$. Residual sums and parallel branches follow by linearity of the
copying operation $R$. A content-only map $h_i\mapsto f(h_i)$ is covariant. A local map that also
reads $m_i$ need not be: $f(h_i,m_i)=m_i$ returns $m_i/k$ on equal children instead of the copied
parent value $m_i$.

\paragraph{Covariance does not select a density.}
Let $\nu$ be any fixed finite measure absolutely continuous with respect to $\mu$.
The reduction $\sum_i\nu(C_i)\varphi(h_i)$ is invariant because $\nu$ is additive and refinement
preserves content. For example, on $[0,1]$ with Lebesgue measure, take
$d\nu=(1+x^2)\,dx$. The ratios $\nu(C_i)/\mu(C_i)$ can differ between cells with the same
content. Thus covariant weights need not be a local function of mass and content alone.
Consistency with a \emph{specified} operator selects the intended density; Proposition~\ref{prop:split}
claims uniqueness only within its stated local class.

\subsection{Quadrature consistency and sampling-density bias}\label{app:quadrature}
\begin{equation}\label{eq:cont}
 \mathcal A_\mu^K(s,v)=\frac{\int K(s(x))v(x)\,d\mu(x)}{\int K(s(x))\,d\mu(x)}.
\end{equation}
Assume $\Om$ is a compact metric space, $\mu$ is finite and nonzero, $s,v$ are continuous, and
$K\circ s$ is continuous and strictly positive.
For a partition with maximum diameter $\delta$, write $f=K\circ s$ and $g=(K\circ s)v$. Uniform continuity
gives
\[
\left|\sum_i\mu(C_i)f(x_i)-\int f\,d\mu\right|
\le\mu(\Om)\,\omega_f(\delta),
\]
where $\omega_f$ is the modulus of continuity. The corresponding bound holds componentwise for
$g$. Both errors vanish with $\delta$, and $\int f\,d\mu>0$, so their ratio converges.
Merely assuming integrability would not guarantee convergence of arbitrary point evaluations.

\begin{proposition}[Sampling-density bias]\label{prop:incons}
Under the continuity and compactness assumptions above, if the empirical representative-point
measure converges weakly to $\rho\,d\mu/\int\rho\,d\mu$ for continuous $\rho>0$, then
\begin{equation}\label{eq:bias}
 \widehat{\mathcal A}_{\std}^K\longrightarrow\mathcal A_{\rho\mu}^K
 =\mathcal A_\mu^K+\frac{\Cov_\pi(v,\rho)}{\E_\pi[\rho]},
 \qquad d\pi=\frac{K(s)\,d\mu}{\int K(s)\,d\mu}.
\end{equation}
\end{proposition}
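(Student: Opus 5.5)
The plan is to read standard attention as a quadrature against the empirical representative-point measure and pass to the limit. Let $x_1,\dots,x_N$ be the representative points and $\hat\nu_N=\frac1N\sum_i\delta_{x_i}$. Standard attention with kernel $K$ is
\[
\widehat{\mathcal A}_{\std}^K=\frac{\sum_iK(s(x_i))v(x_i)}{\sum_iK(s(x_i))}
=\frac{\int g\,d\hat\nu_N}{\int f\,d\hat\nu_N},
\]
with $f=K\circ s$ and $g=(K\circ s)v$ as in the quadrature paragraph above. The factor $1/N$ cancels between numerator and denominator.

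\textbf{Step 1: the limit is $\mathcal A_{\rho\mu}^K$.} Both $f$ and $g$ are continuous and bounded on the compact space $\Om$. Weak convergence of $\hat\nu_N$ to $\nu=\rho\,\mu/\int\rho\,d\mu$ therefore gives
\[
\int f\,d\hat\nu_N\to\int f\,d\nu,\qquad \int g\,d\hat\nu_N\to\int g\,d\nu,
\]
with the second convergence holding componentwise when $v$ is vector-valued. The denominator limit $\int f\rho\,d\mu/\int\rho\,d\mu$ is strictly positive, because $f>0$ and $\rho>0$ are continuous on a compact space and so are bounded below, and $\mu\neq0$. Continuity of the quotient map then gives convergence to the ratio. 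The normalizing constant $\int\rho\,d\mu$ cancels, leaving
\[
\frac{\int K(s)\,v\,\rho\,d\mu}{\int K(s)\,\rho\,d\mu}=\mathcal A_{\rho\mu}^K .
\]

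\textbf{Step 2: the covariance identity.} Divide the numerator and denominator by $\int K(s)\,d\mu>0$ to rewrite everything under $\pi$:
\[
\mathcal A_{\rho\mu}^K=\frac{\E_\pi[v\rho]}{\E_\pi[\rho]},\qquad \mathcal A_\mu^K=\E_\pi[v].
\]
Then apply $\E_\pi[v\rho]=\E_\pi[v]\,\E_\pi[\rho]+\Cov_\pi(v,\rho)$, componentwise for vector $v$. Dividing by $\E_\pi[\rho]>0$ gives exactly \eqref{eq:bias}.

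\textbf{Main obstacle.} The only delicate point is justifying the limit. Weak convergence applies only to bounded continuous test functions, so it is essential that $f$ and $g$ are continuous on a compact domain, which makes them bounded; mere integrability would not suffice, as the paper remarks. The denominator must also be kept bounded away from zero, and the strict positivity of $f$ and $\rho$ on a compact space is what guarantees this.
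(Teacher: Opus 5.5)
Your proposal is correct and matches the paper's proof: both write standard attention as a ratio of integrals of $f$ and $g$ against the empirical measure, pass to the limit by weak convergence using continuity of $f$ and $g$, and then rewrite the limit under $\pi$ so that subtracting $\E_\pi[v]$ leaves the covariance term. Your explicit check that the limiting denominator stays strictly positive, using compactness and the positivity of $f$ and $\rho$, fills in a step the paper leaves implicit.
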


\begin{proof}[Proof of Proposition~\ref{prop:incons}]
Let $\eta_N=N^{-1}\sum_i\delta_{x_i}$. Weak convergence of $\eta_N$ and continuity of $f,g$ imply
\[
\frac{\int g\,d\eta_N}{\int f\,d\eta_N}
\longrightarrow
\frac{\int g\rho\,d\mu}{\int f\rho\,d\mu}.
\]
Using the probability measure $d\pi=f\,d\mu/\int f\,d\mu$, this limit equals
$\E_\pi[v\rho]/\E_\pi[\rho]$. Subtracting $\E_\pi[v]$ yields
$\Cov_\pi(v,\rho)/\E_\pi[\rho]$. The measure-weighted convergence follows from the preceding
quadrature bound.
\end{proof}

\paragraph{Second-order rates.}
On an interval with Lebesgue measure, the composite midpoint error for a twice continuously
 differentiable integrand is bounded by a constant times $\sum_i h_i^3$, where $h_i$ is cell width.
If $\max_i h_i=O(N^{-1})$, this sum is $O(N^{-2})$. For the standard estimator, suppose cells
are generated by a smooth increasing map of a uniform grid, with derivative bounded above and away
from zero. A physical cell midpoint differs from the image of its parameter midpoint by
$O(N^{-2})$. Applying midpoint quadrature in that parameter yields second-order convergence to the
sampling-density functional. These conditions explain the numerical rate in Figure~\ref{fig:convergence};
they are stronger than the assumptions needed for existence of the limit.

\paragraph{Numerical construction.}\label{app:numerics}
Figure~\ref{fig:convergence} uses $K(s)=e^s$, $\Om=[0,4]$ with Lebesgue measure, and
$s(x)=1.3\sin(2x)+0.4x$, $v(x)=\cos(1.7x)+0.5x$, and
$\rho(x)=1+0.8\sin(1.1x+0.3)$. Cell boundaries are the equal quantiles of the normalized
integral of $\rho$; representatives are physical cell midpoints. The token counts are
$16,64,256,1024,4096,16384$. Adaptive integration with absolute and relative tolerances
$2\times10^{-13}$ gives $\mathcal A_\mu=1.8453563272$ and
$\mathcal A_{\rho\mu}=0.9459689468$. Their difference is approximately 0.8994.

\begin{figure}[!htbp]
\centering\includegraphics[width=\textwidth]{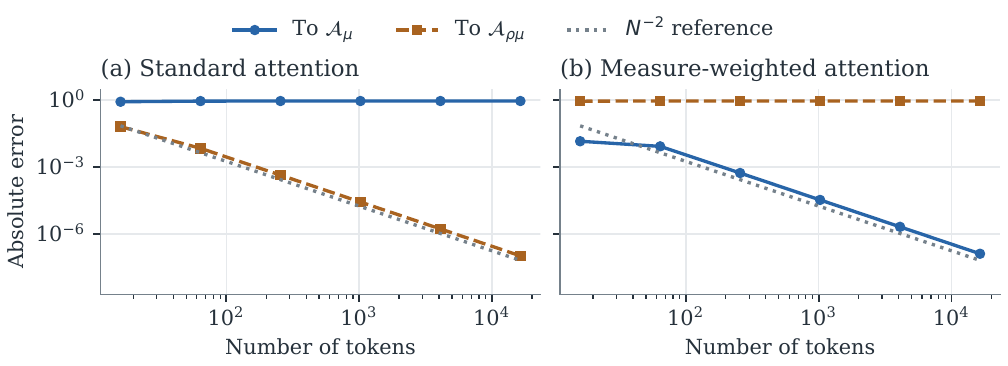}
\caption{\textbf{Convergence to different limits.} Each estimator is compared with the intended
functional $\mathcal A_\mu$ and the sampling-density functional $\mathcal A_{\rho\mu}$.
The same smooth signal is evaluated on increasingly fine nonuniform partitions. Reference integrals
use adaptive quadrature. The dashed reference has slope $-2$; the nonvanishing error reflects the
difference between the two limits. The construction above specifies the signal and partitions.}
\label{fig:convergence}
\end{figure}

\subsection{Splitting, uniqueness, and masks}\label{app:splitting}
\begin{proof}[Proof of Proposition~\ref{prop:split}]
For a token with score $s$ and value $v$, children of masses $m_j$ contribute
$\sum_j m_j K(s)v=m K(s)v$ to the numerator and $\sum_j m_j K(s)=m K(s)$ to the denominator.
All other terms remain unchanged, proving sufficiency for arbitrary mass splits.

For necessity in the local class, choose a second token with a different value and positive
weight. Preserving the two-token output for all such choices requires the total weight assigned
to the children to equal the parent's weight. An equal split thus gives
$k w(m/k)=w(m)$ for every integer $k\ge1$. It follows that $w(qm_0)=q w(m_0)$ for every positive
rational $q$. Set $c=w(m_0)/m_0$. For arbitrary $x>0$, choose rational sequences
$q_n\uparrow x/m_0$ and $r_n\downarrow x/m_0$. Monotonicity gives
$cq_n m_0\le w(x)\le cr_n m_0$, so $w(x)=cx$. Conversely, this form preserves all splits by
additivity, and its common factor cancels in the normalized attention.
\end{proof}

For the exponent family, the children contribute
$k(m/k)^\alpha K(s)=k^{1-\alpha}m^\alpha K(s)$. Requiring equality for $k>1$ gives $\alpha=1$.
A single split factor does not characterize all monotone weights. For sufficiently small
$\epsilon>0$, $w(m)=m[1+\epsilon\sin(2\pi\log_2 m)]$ is increasing and invariant to halving,
but not to all integer splits.

Consistency is weaker than exact finite-partition invariance. For example,
$w(m)=m\exp(c m^2)$ differs from $m$ by $O(m^3)$. Under regular refinement it can retain a
second-order quadrature error while failing exact split invariance. This separates the two uses
of token measure in the main text.

The attention proof applies row by row to visible keys. For network-level duplication, each copy
must receive the same incoming representation, positional treatment, and visible total mass.
A parent that sees itself as a whole is not equivalent to a causal chain in which child $j$ sees
only children $1,\ldots,j$. A physical causal operator may still be well-defined, but its outputs
at distinct child times are not copies of a single parent output.

\subsection{Learning a measure exponent}\label{app:identification}
\paragraph{Gradient identity.}
For one row, let $g_i=\partial L/\partial z_i$. Softmax shift invariance implies $\sum_i g_i=0$.
The chain rule gives
\[
\frac{\partial L}{\partial\alpha}=\sum_i g_i b_i
=\sum_i g_i(b_i-\bar b)=n\Cov_i(g_i,b_i).
\]
Cauchy--Schwarz then yields
$\bigl|\partial L/\partial\alpha\bigr|\le\sqrt n\,\|\nabla_z L\|_2\,\sigma(b)$, since
$\sum_i(b_i-\bar b)^2=n\sigma^2(b)$. The covariance uses the uniform empirical distribution over
keys. For a shared exponent in multiple rows, heads, or layers, the total gradient is the sum
of these row contributions; cancellation can further reduce it.

If a fraction $p$ of keys has physical support, its log-measure has within-group variance
$\sigma_{\mathrm{phys}}^2$ and mean $\ell$, and all other keys have $b=0$, then
\begin{equation}\label{eq:channels}
\sigma^2(b)=p\sigma_{\mathrm{phys}}^2+p(1-p)\ell^2.
\end{equation}
This is the law of total variance. It distinguishes variation among physical keys from their
common offset relative to the auxiliary keys. Neither component alone determines the loss
sensitivity, which also depends on the gradient direction.

\begin{proposition}[Ambiguity with shared type biases]\label{prop:identification}
Fix score arrays $s^{(r)}$ over a finite family of rows or tokenizations $r\in\mathcal R$.
Let $\tau(i)$ index a key's modality or auxiliary type, and consider
$z_i^{(r)}=s_i^{(r)}+c_{\tau(i)}+\alpha b_i^{(r)}$. Define
\begin{equation}\label{eq:did}
D_{\mathrm{id}}^2=\min_{d\in\R^T}\frac1{|\mathcal R|}\sum_{r\in\mathcal R}
\Var_i\big(b_i^{(r)}-d_{\tau(i)}\big).
\end{equation}
There is a shared vector $d$ such that $(\alpha,c)$ and $(\alpha+t,c-td)$ give the same attention
probabilities on every row for all $t$ if and only if $D_{\mathrm{id}}=0$.
\end{proposition}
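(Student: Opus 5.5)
The argument reduces the statement to the observation that attention probabilities in a row are invariant exactly under row-constant shifts of the logits. Fix $t$ and $d$. Moving from $(\alpha,c)$ to $(\alpha+t,c-td)$ changes the logit of key $i$ in row $r$ by $t\bigl(b_i^{(r)}-d_{\tau(i)}\bigr)$. Softmax over a row is unchanged by adding a vector $\delta$ if and only if $\delta$ is constant over the visible keys of that row, since $\softmax(z+\delta)=\softmax(z)$ forces $\delta_i-\delta_{i'}=\log$ of a ratio of equal probabilities, which is zero. So the first step is the equivalence
\[
(\alpha,c)\text{ and }(\alpha+t,c-td)\text{ agree on every row for all }t
\iff
b_i^{(r)}-d_{\tau(i)}\text{ is constant in }i\text{ for every }r .
\]
For the forward direction it suffices to take a single $t\neq0$, since for $t=0$ the condition is trivial.

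The second step links this to $D_{\mathrm{id}}$. For a given $d$, the objective in \eqref{eq:did} is an average of nonnegative within-row variances. It vanishes exactly when each row's vector $b^{(r)}-d_\tau$ is constant, because variance over the uniform empirical key distribution is zero iff the vector is constant. The two directions then follow.

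If $D_{\mathrm{id}}=0$, I need a minimizer $d$ that attains zero. The objective is a convex quadratic in $d\in\R^T$ that is bounded below, and such a quadratic attains its infimum. Concretely, it is a finite sum of squares of affine functions of $d$, so its minimum is attained by least squares. This minimizer gives constant row residuals, and the first step yields invariance for all $t$.

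Conversely, suppose such a shared $d$ exists. Then every row residual is constant, the objective at that $d$ is zero, and so $D_{\mathrm{id}}=0$.

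The only delicate point is attainment of the minimum in \eqref{eq:did}. It follows from the least-squares structure: a bounded-below quadratic on a finite-dimensional space attains its infimum on the affine solution set of the normal equations. I would also note that types absent from every row leave $d_\tau$ unconstrained. This does not affect the argument.
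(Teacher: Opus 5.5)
Your proposal is correct and follows the paper's proof: the logit change $t(b_i^{(r)}-d_{\tau(i)})$ leaves the softmax unchanged exactly when it is row-constant, which is equivalent to every nonnegative variance term in \eqref{eq:did} vanishing and hence to a zero minimum. Your least-squares argument that the minimum is attained corresponds to the remark the paper makes just after its proof.
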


\begin{proof}
The logit change is $t(b_i^{(r)}-d_{\tau(i)})$. Two finite logit vectors have identical softmax
probabilities exactly when they differ by a constant within a row. Hence equality for all $t$
and all $r$ is equivalent to every variance in Equation~\eqref{eq:did} vanishing. These are
nonnegative terms, so this is equivalent to a zero minimum.
\end{proof}

The least-squares minimum exists; its vector need not be unique because common shifts cancel.
Taking $d=0$ bounds $D_{\mathrm{id}}^2$ by the mean rowwise variance of $b$.
Adding a fixed type-specific reference offset to $b$ leaves $D_{\mathrm{id}}$ unchanged. A single
configuration with constant $b$ within each type has $D_{\mathrm{id}}=0$; several configurations
need not, because one shared bias vector must explain them all. Positive $D_{\mathrm{id}}$
removes this particular ambiguity in attention probabilities. It does not establish identification
through arbitrary values, downstream losses, or a jointly trainable score function.

\paragraph{A uniform physical offset.}
When $b_i=\ell$ on physical keys and zero elsewhere, all probabilities depend on $\alpha$ and
$\ell$ only through $u=\alpha\ell$. If their baseline attention share is $p_0$, their new share is
$w(u)=p_0e^u/(1-p_0+p_0e^u)$, with $w'(u)=w(u)[1-w(u)]$.
For fixed within-group value averages $\bar v_P,\bar v_A$,
\[
\frac{\partial L}{\partial\alpha}
=\ell w(u)[1-w(u)]\left\langle\nabla_o L,\bar v_P-\bar v_A\right\rangle.
\]
Thus a large log-measure dispersion can coincide with a small gradient through saturation or
alignment. If the profile in $u$ has a unique finite minimizer, $\alpha^*\ell=u^*$; otherwise that
argmin relation is not defined. The baseline share $p_0$ is an attention share, not generally the
fraction of physical tokens.

\section{Coupled stability, support, and composition}\label{app:stability}
\subsection{A sharp change-of-weight bound}\label{app:tilt}
We use total variation with the convention
$\operatorname{TV}(P,Q)=\frac12\sum_a|P_a-Q_a|$.
The following estimate is a sharp relation between total variation and Hilbert
distance~\citep{cohen2023hilbert}; we include its finite-space proof to fix the normalization.

\begin{lemma}[Bounded likelihood-ratio range]\label{lem:tilt}
Let $P$ be a probability distribution on a finite set and let $f>0$ on its support.
Set $Q_a=P_af_a/\E_P f$ and
$\Delta=\max_a\log f_a-\min_a\log f_a$. Then
$\operatorname{TV}(P,Q)\le\tanh(\Delta/4)$.
\end{lemma}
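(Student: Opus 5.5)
Our plan is to reduce the bound to a one-dimensional extremal problem over the threshold set where $Q$ exceeds $P$. We may normalize $f$ by any positive constant, since $Q$ is unchanged. After normalization the values of $f$ on the support of $P$ lie in $[1,e^{\Delta}]$; write $\lambda=e^{\Delta}$. Because $\sum_a(Q_a-P_a)=0$, we have
\[
\operatorname{TV}(P,Q)=\sum_{a\in S}(Q_a-P_a)=Q(S)-P(S),\qquad S=\{a:f_a>\E_Pf\}.
\]
Set $p=P(S)$. Every $a\in S$ has $f_a\le\lambda$, and every $a\notin S$ has $f_a\ge1$. Hence
\[
Q(S)=\frac{\sum_{a\in S}P_af_a}{\sum_{a\in S}P_af_a+\sum_{a\notin S}P_af_a}\le\frac{\lambda p}{\lambda p+(1-p)},
\]
because the map $x\mapsto x/(x+y)$ increases in $x$ and decreases in $y$. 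Therefore
\[
\operatorname{TV}(P,Q)\le g(p):=\frac{\lambda p}{\lambda p+1-p}-p .
\]

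The next step maximizes $g$ over $p\in[0,1]$. Writing $D=1+(\lambda-1)p$, we have $g(p)=p(1-p)(\lambda-1)/D$. Setting the derivative to zero is equivalent to $\lambda/D^2=1$, so $D=\sqrt\lambda$ and $p^*=1/(\sqrt\lambda+1)$. At this point $1-p^*=\sqrt\lambda/(\sqrt\lambda+1)$. Substituting and simplifying gives
\[
g(p^*)=\frac{\sqrt\lambda-1}{\sqrt\lambda+1}=\frac{e^{\Delta/2}-1}{e^{\Delta/2}+1}=\tanh(\Delta/4).
\]

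We expect the main obstacle to be justifying that the optimization step loses nothing: the extremal configuration puts $f$ at its two endpoint values, with mass $p^*$ on the top value. This same two-atom configuration shows that the bound is attained, which is the sharpness used in Theorem~\ref{thm:stability}. The degenerate cases are immediate. If $S$ is empty, then $\operatorname{TV}=0$. If $\Delta=0$, then $f$ is constant, $Q=P$, and both sides vanish. The finite-space assumption is only used to write the sums, and restricting $f$ to the support of $P$ suffices.
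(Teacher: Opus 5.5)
Your proof is correct, and it reaches the bound by a different reduction from the paper's. The paper writes $\operatorname{TV}(P,Q)=\E_P|f-u|/(2u)$ with $u=\E_Pf$. It bounds the convex function $x\mapsto|x-u|$ on $[f_{\min},f_{\max}]$ by its chord to get $\operatorname{TV}\le(f_{\max}-u)(u-f_{\min})/\bigl(u(f_{\max}-f_{\min})\bigr)$, and then maximizes over the mean $u$. You instead take the set $S$ where $Q$ exceeds $P$ and use monotonicity of $x/(x+y)$ to bound $\operatorname{TV}$ by a function of $p=P(S)$.

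After normalizing $f_{\min}=1$ and $f_{\max}=\lambda$, the substitution $u=1+(\lambda-1)p$ turns the paper's one-variable function exactly into your $g(p)$. Your $D$ plays the role of the paper's $u$. So both reductions arrive at the same two-atom extremal problem, by different key steps:
\begin{itemize}
\item the paper's chord step gives an intermediate bound directly in terms of the normalizer $\E_Pf$;
\item your threshold-set step makes the extremal configuration explicit, so the sharpness example used for Theorem~\ref{thm:stability} comes out of the proof rather than needing a separate construction.
\end{itemize}

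Two small corrections. First, the upper bound needs $\max_{[0,1]}g=g(p^*)$, and setting $g'=0$ alone does not show this. Note that $g'(p)=\lambda/D^2-1$ is decreasing, so $g$ is concave; alternatively, $g(0)=g(1)=0$ and there is a unique interior critical point. Second, the step you flag as the ``main obstacle'' is not needed for the lemma at all. Once $\operatorname{TV}\le g(p)\le\max g$, the inequality is complete, and whether the optimization ``loses nothing'' bears only on sharpness.
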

\begin{proof}
Write $a=\min f$, $b=\max f$, and $u=\E_Pf$. If $a=b$, then $P=Q$.
Otherwise, the convex function $x\mapsto|x-u|$ lies below the chord joining its values at $a,b$.
Taking expectations gives
\[
 \operatorname{TV}(P,Q)
 =\frac{\E_P|f-u|}{2u}
 \le\frac{(b-u)(u-a)}{u(b-a)}
 \le\frac{\sqrt b-\sqrt a}{\sqrt b+\sqrt a}
 =\tanh(\Delta/4).
\]
The middle expression is maximized at $u=\sqrt{ab}$, as follows by differentiating
$(a+b-u-ab/u)/(b-a)$ on $[a,b]$.
\end{proof}

\begin{proof}[Proof of Theorem~\ref{thm:stability}]
Lift both attention distributions to the support of the coupling:
\[
 P_{ij}=\frac{\gamma_{ij}K(s_i)}{\sum_{ab}\gamma_{ab}K(s_a)},\qquad
 Q_{ij}=\frac{\gamma_{ij}K(t_j)}{\sum_{ab}\gamma_{ab}K(t_b)}.
\]
The first marginal of $P$ is the original measure-weighted attention on $i$; the second marginal of
$Q$ is the new attention on $j$. Their density ratio is the normalized tilt
$f_{ij}=K(t_j)/K(s_i)$, so Lemma~\ref{lem:tilt} applies with the stated $\Delta$.
For any values of diameter $D_v$, the positive and negative parts of $P-Q$ have equal mass
$\operatorname{TV}(P,Q)$. Their normalized value means lie in the convex hull of $\{v_i\}$,
whose diameter is at most $D_v$ in any norm. Therefore
\[
\begin{aligned}
 \|B-A\|
 &\le\|\E_Qv_i-\E_Pv_i\|+\E_Q\|w_j-v_i\|\\
 &\le D_v\operatorname{TV}(P,Q)+\epsilon_v
 \le D_v\tanh(\Delta/4)+\epsilon_v.
\end{aligned}
\]
The same proof permits replacing $\epsilon_v$ by its $Q$-weighted average. Interchanging the two
representations gives a bound using the diameter of $\{w_j\}$ instead.

Sharpness already holds for the exponential kernel and two scalar values $0,1$.
For any $\Delta\ge0$, use a diagonal mass coupling and choose baseline attention probabilities
\[
 P=\left(\frac{e^{\Delta/2}}{1+e^{\Delta/2}},
          \frac{1}{1+e^{\Delta/2}}\right).
\]
Score changes $-\Delta/2,+\Delta/2$ swap these probabilities. The scalar output difference is
exactly $\tanh(\Delta/4)$, with $\epsilon_v=0$ and $D_v=1$.
The coefficient of $\epsilon_v$ is attained by translating all values by the same vector while
leaving scores unchanged. Finally, $|\log K(t_j)-\log K(s_i)|\le\epsilon_s$ implies
$\Delta\le2\epsilon_s$.
\end{proof}

The theorem requires finite positive kernel values, but no lower bound on individual token mass
or on the absolute score normalizer. A common multiplicative change of the kernel in either
representation cancels. Equal total masses allow the physical intersection coupling used in the
main text; algebraically, any coupling of the two separately normalized mass vectors suffices.

\paragraph{Partitions and continuum comparison.}
For partitions of one domain, the row and column sums of
$\gamma_{ij}=\mu(C_i\cap D_j)/\mu(\Om)$ are the required normalized masses.
If representatives $x_i,y_j$ belong to intersecting cells of diameters at most $h,h'$, then
$d(x_i,y_j)\le h+h'$. Thus Lipschitz value and log-kernel fields have
$\epsilon_v\le L_v(h+h')$ and $\Delta\le2L_s(h+h')$.
This applies to nested and non-nested partitions.
The same proof works on a general probability space with essential bounds and integrals in
place of maxima and sums, provided both kernel normalizers are finite and positive and the
values are integrable under the corresponding lifted attention laws. Coupling a token $i$ to physical points in $C_i$ directly compares its
quadrature with a continuous target: value error at most $L_vh$ and log-kernel error at most
$L_sh$ yield discrepancy at most $L_vh+D_v\tanh(L_sh/2)$, where $D_v$ bounds the diameter of
the reference values. Encoder approximation errors add to these paired errors.

\subsection{Support geometry and rotary scores}\label{app:support_bound}
For $U(p)=\mathbb E_{X\sim p}R(X)$, the support perturbation bound is
\begin{equation}\label{eq:support_stability}
 \|U(p)-U(p')\|_{\mathrm{op}}\le\min\{2,\omega_{\max}W_1(p,p')\}.
\end{equation}
In this subsection, vector norms are Euclidean and operator norms are their induced spectral norms.
Let $R(x)$ be a real block-diagonal rotary matrix with two-dimensional rotation blocks of angular
frequencies $\omega_r$, and write $\omega_{\max}=\max_r|\omega_r|$.
For probability measures with finite first moment, define $U(p)=\int R(x)\,dp(x)$.
Each $R(x)$ is orthogonal, so $\|U(p)\|_{\mathrm{op}}\le1$.
For a single block, the operator norm of the difference is
$2|\sin(\omega_r(x-y)/2)|$; hence
\[
 \|R(x)-R(y)\|_{\mathrm{op}}
 \le\min\{2,\omega_{\max}|x-y|\}.
\]
For any coupling $\lambda$ of $p,p'$, Jensen's inequality and this estimate give
\[
 \|U(p)-U(p')\|_{\mathrm{op}}
 \le\int\|R(x)-R(y)\|_{\mathrm{op}}\,d\lambda(x,y)
 \le\omega_{\max}\int|x-y|\,d\lambda(x,y).
\]
Infimizing over couplings, and also using $\|U(p)-U(p')\|_{\mathrm{op}}\le2$, proves
Equation~\eqref{eq:support_stability}. In multiple physical dimensions, the same proof uses
frequency vectors and their largest Euclidean norm, together with Euclidean transport distance.

\paragraph{A score bound with explicit constants.}
Consider a head with score
\[
 s=\frac{q^\top U(p_q)^\top U(p_k)k}{\sqrt d}
\]
and a corresponding primed configuration. Suppose both query norms are at most $Q$, both key
norms at most $H_k$, and
$\|q-q'\|\le\epsilon_q$, $\|k-k'\|\le\epsilon_k$.
Let $W_1(p_q,p_q')\le\delta_q$ and $W_1(p_k,p_k')\le\delta_k$.
Add and subtract terms changing one factor at a time. Contraction of each averaged rotation yields
\begin{equation}\label{eq:score_support_bound}
 |s-s'|\le
 \frac{H_k\epsilon_q+Q\epsilon_k+
 QH_k[\min\{2,\omega_{\max}\delta_q\}+
       \min\{2,\omega_{\max}\delta_k\}]}{\sqrt d}.
\end{equation}
For exponential attention this is an $\epsilon_s$ in Theorem~\ref{thm:stability}; for another
positive kernel, multiply by a Lipschitz constant of $\log K$ on the attained score range.
A fixed readout query has $\epsilon_q=\delta_q=0$.
If the supports lie in intersecting cells, every pair of support points is at distance at most
$h+h'$, so $W_1\le h+h'$. Equal mass without a constraint on support displacement gives no such bound.
For Gaussian extent filters, whose supports are unbounded, coupling
$x+\sigma Z$ and $x'+\sigma'Z$ with the same standard normal $Z$ instead gives
$W_1\le|x-x'|+\sqrt{2/\pi}|\sigma-\sigma'|$; the variance-matched filter uses
$\sigma=h/\sqrt{12}$.

\paragraph{Independent asymmetric supports.}
For independent $X_i,X_j$, Fubini's theorem gives
\[
 \E e^{\mathrm i\omega(X_j-X_i)}
 =\E e^{-\mathrm i\omega X_i}\E e^{\mathrm i\omega X_j}
 =e^{\mathrm i\omega(x_j-x_i)}\overline{\phi_i(\omega)}\phi_j(\omega).
\]
No symmetry assumption is required. Averaging the real rotary bilinear form similarly gives
$q^\top U(p_i)^\top U(p_j)k$ when content vectors are fixed over the supports and the two support
draws are independent. This equality concerns the score, not the exponential of that score or
the normalized attention output. Uniform centered intervals and centered Gaussians have real
characteristic functions; an asymmetric distribution generally does not.

\subsection{Alignment maps and network propagation}\label{app:network_bound}
\begin{corollary}[Composition across layers]\label{cor:network}
Let $R_\ell$ align coarse and fine representations at layer $\ell$. Suppose the fine block has
Lipschitz constant $L_\ell$, and its local discrepancy from the aligned coarse block is at most
$\eta_\ell$ on the relevant states. If the initial discrepancy is $e_0$, then
\begin{equation}\label{eq:network}
 e_L\le e_0\prod_{r=1}^L L_r+
 \sum_{\ell=1}^L\eta_\ell\prod_{r=\ell+1}^L L_r.
\end{equation}
\end{corollary}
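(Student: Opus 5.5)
The plan is a standard one-step recursion followed by induction on depth. Write $X_\ell$ for the fine state after layer $\ell$ and $Y_\ell$ for the coarse state. Let $F_\ell$ and $G_\ell$ denote the fine and coarse blocks. Measure the discrepancy $e_\ell=\|X_\ell-R_\ell Y_\ell\|$ in the fine representation space, with $R_\ell$ the alignment map of the statement. The quantity $\eta_\ell$ is then read as a bound on $\|F_\ell(R_{\ell-1}Y)-R_\ell G_\ell(Y)\|$ over the relevant coarse states $Y$. This is exactly the "local discrepancy from the aligned coarse block." It is the quantity that Theorem~\ref{thm:stability} controls for an attention sublayer once the other components of the block are accounted for.

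The key step is a triangle inequality at layer $\ell$. We insert the intermediate point $F_\ell(R_{\ell-1}Y_{\ell-1})$:
\[
 e_\ell=\|F_\ell(X_{\ell-1})-R_\ell G_\ell(Y_{\ell-1})\|
 \le\|F_\ell(X_{\ell-1})-F_\ell(R_{\ell-1}Y_{\ell-1})\|+\|F_\ell(R_{\ell-1}Y_{\ell-1})-R_\ell G_\ell(Y_{\ell-1})\|.
\]
The first term is at most $L_\ell e_{\ell-1}$ by the Lipschitz hypothesis. This requires both arguments to lie in the set on which $L_\ell$ is valid. The second term is at most $\eta_\ell$ by hypothesis. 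Together these give the recursion $e_\ell\le L_\ell e_{\ell-1}+\eta_\ell$.

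We then unroll the recursion by induction on $L$. The base case $L=0$ is trivial. For the inductive step, assume Equation~\eqref{eq:network} holds at depth $L-1$. Multiplying it by $L_L\ge0$ and adding $\eta_L$ reproduces the bound at depth $L$. The empty product, taken as $1$, accounts for the final term $\eta_L$.

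I expect the main obstacle to be the domain bookkeeping rather than the algebra. Local Lipschitz constants and local discrepancies are only meaningful on the ``relevant states.'' I would therefore state explicitly that the constants are taken over a set containing both the actual fine trajectory $X_{\ell-1}$ and the aligned coarse trajectory $R_{\ell-1}Y_{\ell-1}$, together with the segment between them if the constant comes from a derivative bound. With that convention the argument is exactly the discrete Gr\"onwall estimate above. I would also remark that residual branches, normalization, positions and masks enter only through $L_\ell$ and $\eta_\ell$, as the main text indicates.
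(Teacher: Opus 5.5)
Your proof is correct and follows the paper's argument: you insert the fine block applied to the aligned coarse state, bound the two pieces by $L_\ell e_{\ell-1}$ and $\eta_\ell$, and unroll $e_\ell\le L_\ell e_{\ell-1}+\eta_\ell$ with empty products equal to one. The only difference is notation, since the paper writes $F_\ell$ for the coarse block and $G_\ell$ for the fine block, so your intermediate point $F_\ell(R_{\ell-1}Y_{\ell-1})$ is the paper's $G_\ell(R_{\ell-1}x_{\ell-1})$.
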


Let $X_\ell,Y_\ell$ be normed state spaces for the two representations at layer $\ell$.
Write their blocks as $F_\ell:X_{\ell-1}\to X_\ell$ and
$G_\ell:Y_{\ell-1}\to Y_\ell$, and choose alignment maps $R_\ell:X_\ell\to Y_\ell$.
The maps need not be linear. Define the local alignment discrepancy by
\begin{equation}\label{eq:commutator_bound}
 \|G_\ell(R_{\ell-1}x)-R_\ell(F_\ell x)\|_{Y_\ell}\le\eta_\ell
\end{equation}
for the relevant coarse states $x$. Assume $G_\ell$ is $L_\ell$-Lipschitz for the fine states and
aligned coarse states being compared. For trajectories $x_\ell=F_\ell x_{\ell-1}$,
$y_\ell=G_\ell y_{\ell-1}$, set
$e_\ell=\|y_\ell-R_\ell x_\ell\|_{Y_\ell}$.

\begin{proof}[Proof of Corollary~\ref{cor:network}]
Insert $G_\ell(R_{\ell-1}x_{\ell-1})$ and apply the triangle inequality:
\[
\begin{aligned}
 e_\ell
 &\le\|G_\ell(y_{\ell-1})-G_\ell(R_{\ell-1}x_{\ell-1})\|_{Y_\ell}
      +\eta_\ell\\
 &\le L_\ell e_{\ell-1}+\eta_\ell.
\end{aligned}
\]
Repeated substitution proves Equation~\eqref{eq:network}, with empty products equal to one.
\end{proof}

The discrepancy in Equation~\eqref{eq:commutator_bound} compares complete blocks, including changes
in masks, normalization, routing, and residual branches. A Lipschitz readout supplies a corresponding
output bound.
For logits $z,z'$ with $\|z-z'\|_\infty\le\epsilon$, if the reference top logit exceeds every
other logit by more than $2\epsilon$, then its top index is unchanged: each pairwise margin
decreases by at most $2\epsilon$.

\paragraph{Non-nested partitions.}
For non-nested partitions, the coupling supplies a common index set. Let
$I=\{(i,j):\gamma_{ij}>0\}$ and lift both states to this common index set,
$\widehat h_{ij}=h_i$, $\widehat h'_{ij}=h'_j$, with norm
$\|u\|=\max_{(i,j)\in I}\|u_{ij}\|$ and masses $\gamma_{ij}$.
For blocks preserving identical splits, the original coarse computation is represented exactly
by the lifted block with coarse positional and visibility metadata copied to its duplicates;
the same holds for the fine block with fine metadata. These two lifted blocks act on the same
ambient state space, so the alignment in the corollary can be the identity. This construction
requires every block to preserve identical splits with the stated positional and visibility metadata.
The original partition coupling remains the basis for comparing the lifted states.

\paragraph{Explicit attention-block constants.}
For a quantitative sufficient condition, fix a mass coupling and set
$e=\max_{\gamma_{ij}>0}\|h_i-h_j'\|$. For each matched query pair, assume its visible key masses
admit a coupling supported on these same matched key pairs.
Suppose paired values differ by at most $v_\ell e+\tau_\ell$, paired log-kernels by at most
$c_\ell e+\zeta_\ell$, and the reference value diameter is at most $D_\ell$.
For a shared update $f_\ell(h,A)$ satisfying
\[
 \|f_\ell(h,A)-f_\ell(h',A')\|
 \le a_\ell\|h-h'\|+b_\ell\|A-A'\|,
\]
Theorem~\ref{thm:stability} gives the recurrence
\begin{equation}\label{eq:nonlinear_network}
 e_\ell\le a_\ell e_{\ell-1}+b_\ell\left[
 v_\ell e_{\ell-1}+\tau_\ell+
 D_\ell\tanh\!\left(\frac{c_\ell e_{\ell-1}+\zeta_\ell}{2}\right)\right].
\end{equation}
Since $\tanh u\le u$ for $u\ge0$, one can use
$\widehat L_\ell=a_\ell+b_\ell(v_\ell+D_\ell c_\ell/2)$ and
$\widehat\eta_\ell=b_\ell(\tau_\ell+D_\ell\zeta_\ell/2)$ in the same product-sum recurrence.
These constants apply to the coupled-error recurrence above.

With Euclidean vector norms and spectral operator norms, for exponential dot-product attention
with $q=W_Qh$, $k=W_Kh$, $v=W_Vh$ and both state sets
bounded by $\|h\|\le H$, valid choices are
\[
 v_\ell=\|W_V\|_{\mathrm{op}},\qquad
 c_\ell=\frac{2\|W_Q\|_{\mathrm{op}}\|W_K\|_{\mathrm{op}}H}{\sqrt d},\qquad
 D_\ell\le2\|W_V\|_{\mathrm{op}}H.
\]
For averaged rotary scores these bounds remain valid for content perturbations; the support
terms of Equation~\eqref{eq:score_support_bound} enter $\zeta_\ell$.
For multiple heads in the Euclidean norm, combine head discrepancies by their squared sum before
applying the output projection. Shared pointwise maps, residual updates, and featurewise
normalizations contribute their Lipschitz constants on the stated state set; regularization is
needed for normalization at zero variance. These constants contain no token-count factor, but
their depth product can be large. Even the pointwise map $h\mapsto2h$ amplifies an initial error
by $2^L$ after $L$ layers.

\subsection{Modality scales and reference calibration}\label{app:calibration}
\begin{proposition}[Local scales and observable priors]\label{prop:calibration}
Within each modality, positive local weighting invariant to every binary mass split for every score/value configuration has the
form $w_\tau(m)=\kappa_\tau m$. A typewise rescaling preserves all attention probabilities exactly when it is common to every
co-visible key type, including auxiliary atoms, and hence constant on each connected component
of their co-visibility graph.
\end{proposition}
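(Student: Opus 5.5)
The plan is to prove the two claims separately. The first reduces to Cauchy's functional equation, with positivity standing in for the monotonicity used in Proposition~\ref{prop:split}. The second is a row-by-row argument about which factors cancel in the softmax normalizer.

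\emph{Linearity within a modality.} Fix a type $\tau$ and a positive weight function $w_\tau$. I will reuse the necessity step of Proposition~\ref{prop:split}, now with an arbitrary binary split. Take a parent key of type $\tau$ with mass $m=a+b$, where $a,b>0$, and a second key with a different value and positive weight. Split the parent into children of masses $a$ and $b$, keeping the score and value copied. The output is a convex combination of two distinct values, so it is preserved exactly when the total weight on the parent's value is unchanged. Since the kernel factor $K(s)$ is common to both children, this gives
\[
w_\tau(a)+w_\tau(b)=w_\tau(a+b)\qquad\text{for all } a,b>0.
\]
Positivity then gives monotonicity: $w_\tau(a+b)=w_\tau(a)+w_\tau(b)>w_\tau(a)$, so $w_\tau$ is strictly increasing. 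Iterating additivity yields $k\,w_\tau(m/k)=w_\tau(m)$ for every integer $k\ge1$. From there the rational-homogeneity and squeezing argument of Proposition~\ref{prop:split} gives $w_\tau(m)=\kappa_\tau m$ with $\kappa_\tau=w_\tau(m_0)/m_0>0$. Conversely, this form preserves every split by additivity.

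\emph{Typewise rescalings.} Consider multiplying each type's scale by $\lambda_\tau>0$, with auxiliary atoms counted as their own types. In a given row, the unnormalized weights change from $\kappa_{\tau(i)}m_iK(s_i)$ to $\lambda_{\tau(i)}\kappa_{\tau(i)}m_iK(s_i)$.

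Sufficiency: if $\lambda$ is constant over the types visible in the row, the common factor cancels in the normalizer.

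Necessity: suppose a row contains visible keys $i,j$ of types $\tau\ne\tau'$. All weights are positive, so their probability ratio is multiplied by $\lambda_\tau/\lambda_{\tau'}$. Preserving probabilities therefore forces $\lambda_\tau=\lambda_{\tau'}$. Hence probabilities are preserved on every row exactly when $\lambda$ is constant on the set of types co-visible in each row.

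Finally, I translate this to the co-visibility graph, whose vertices are types and whose edges join types that are co-visible in some row. Rowwise equality holds across every edge and so propagates along paths, making $\lambda$ constant on each connected component. Conversely, the types visible in any single row are pairwise adjacent, hence lie in one component. So componentwise constancy implies rowwise constancy, closing the equivalence.

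The step needing the most care is the linearity claim. Proposition~\ref{prop:split} assumed $w$ nondecreasing, and here that hypothesis must instead be derived from positivity together with binary additivity. Without positivity, pathological Cauchy solutions would survive. I will also check that "every score/value configuration" permits a second key with a distinct value in the same row, which is what licenses the reduction to additivity.
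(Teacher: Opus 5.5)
Your proposal is correct and follows the paper's proof essentially step for step: additivity $w_\tau(a+b)=w_\tau(a)+w_\tau(b)$ comes from a binary split against a fixed positive competitor, then strict monotonicity from positivity, then rational homogeneity with a monotone squeeze, and finally rowwise cancellation of a common factor, with equality propagated along edges of the co-visibility graph. Your pairwise probability-ratio argument for necessity, and your remark that the types visible in one row are pairwise adjacent, only spell out the paper's corresponding steps more explicitly.
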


\begin{proof}[Proof of Proposition~\ref{prop:calibration}]
Fix a modality and a mass $m=a+b$. Assign the split token scalar value one and an unsplit
competitor value zero, with positive kernel values and positive competitor weight $c$.
The map $u\mapsto u/(u+c)$ is injective, so preservation of this attention output forces
$w(a+b)=w(a)+w(b)$. Positivity implies strict monotonicity because
$w(y)-w(x)=w(y-x)>0$ for $y>x$. Additivity gives rational homogeneity, and monotone rational
approximation yields $w(m)=\kappa m$ for $\kappa>0$. The argument applies separately to each
modality; conversely such weights preserve every split by additivity.

For the second claim, let $p_i>0$ be the attention probabilities on the visible keys of a row,
and rescale type $\tau$ by $a_\tau>0$. The new probabilities are
$p_i'=a_{\tau(i)}p_i/\sum_j a_{\tau(j)}p_j$.
Equality $p_i'=p_i$ for every visible $i$ is equivalent to all visible factors being the same.
Consequently co-visible types have equal factors, and equality propagates along every edge of
the co-visibility graph. Conversely factors constant on each connected component are common
within every row and cancel in its normalizer.
\end{proof}

The graph includes every key type whose scale is allowed to vary. Auxiliary atoms with a fixed
unit weight can instead be included as vertices with fixed factor one; any connected component
containing them then has its common factor fixed. Strict positivity is required on visible keys;
masked keys are omitted. The statement concerns attention probabilities, so accidental output
equality from identical values does not establish scale invariance.
With uniform native measures $m_\tau^0$, choosing the relative factors $m_i/m_\tau^0$ gives one
on every native physical token and leaves auxiliary factors at one. This preserves the native
attention formula exactly. It selects a reference prior; changing relative modality scales
changes the operator. Algebraic identity does not require two floating-point execution paths
to be bitwise identical.

\subsection{Physical visibility and overlapping representations}\label{app:coverage}
Fix a physical query $q$ and a measurable region $V(q)$ with positive finite measure.
For piecewise-constant key scores and values, the numerator of the intended visible operator is
\[
 \int_{V(q)} K(s(x))v(x)\,d\mu(x)
 =\sum_i\mu(C_i\cap V(q))K(s_i)v_i.
\]
Its denominator is the same identity without values. Under a subdivision of $C_i$ with copied
scores and values, the child visible masses sum to the parent's visible mass, including when
the boundary crosses the cell. This proves invariance for the same physical query. Zero visible
masses are masked. A binary mask testing only a cell representative is generally different from
this partial-cell integral. Support-sensitive scores that change on subdivision require their
own perturbation term even when visible masses are additive.

For two partitions and the same $V(q)$, the visible coupling is
$\gamma_{ij}(q)=\mu(C_i\cap D_j\cap V(q))/\mu(V(q))$.
Its support is contained in that of the full physical coupling, so a bound on all paired hidden
states also bounds the pairs visible to this query. This gives the common visible-key coupling
required by Equation~\eqref{eq:nonlinear_network}. Matched queries at different physical times
need not share $V(q)$; a changed causal prefix is an additional perturbation.

To see the boundary contribution separately, suppose two regions $V,V'$ use identical positive
kernel field $g$ and values of diameter $D$. Their normalized visible distributions satisfy
\[
 \operatorname{TV}(P_V,P_{V'})
 =1-\frac{\int_{V\cap V'}g\,d\mu}
            {\max\{\int_Vg\,d\mu,\int_{V'}g\,d\mu\}}.
\]
Indeed their common mass is the integral of the smaller density on the intersection.
Multiplying this expression by $D$ bounds the output change. Small boundary displacement alone
need not make this term small without control of the affected kernel mass and the normalizers.
Different kernels or values introduce the corresponding additional perturbations.

\paragraph{A partition of unity for overlapping supports.}
Let $\psi_i\ge0$ satisfy $\sum_i\psi_i=1$ almost everywhere. Setting
$m_i(q)=\int_{V(q)}\psi_i\,d\mu$ gives the exact numerator
\[
 \sum_i m_i(q)K(s_i)v_i
 =\int_{V(q)}\sum_i\psi_i(x)K(s_i)v_i\,d\mu(x),
\]
and the corresponding denominator. Replacing $\psi_i$ by nonnegative functions summing to it,
with copied scores and values, preserves both integrals. For two such covers $\psi_i,\chi_j$,
\[
 \gamma_{ij}(q)=\frac{\int_{V(q)}\psi_i(x)\chi_j(x)\,d\mu(x)}{\mu(V(q))}
\]
has the required marginals because each family sums to one. Theorem~\ref{thm:stability} therefore
applies with these effective masses after omitting zero-mass tokens.

This operator mixes token kernels and values, rather than applying nonlinear attention to an
averaged hidden field. On a unit domain, take two fully overlapping tokens with
$\psi_1=\psi_2=1/2$, $h_1=-1,h_2=1$, and score and value both equal to $h$.
Their mean hidden field is zero, but the exponential-kernel mixture returns $\tanh(1)$;
one token with value and score zero returns zero. Coverage accounting does not remove this
feature disagreement. The construction needs an explicit partition of unity, which is not
supplied merely by assigning full physical area to every overlapping crop.

\section{Implementation and primitive checks}\label{app:implementation}
\subsection{Measures, reference scales, and masks}\label{app:reference_impl}
For a temporal cell of duration $h_i$, the physical measure is $h_i$. For a video patch, a
spatiotemporal measure may also include its normalized image area. The experiments specify which
of these factors changes: temporal sweeps hold spatial patching fixed; the spatial-resolution
probe is described separately in Appendix~\ref{app:multiscale}. A text token may have a discrete
reference measure, but the physical interventions in this study do not retokenize text.

The two implementations use related but distinct calibration conventions. The synthetic model
expresses supported-token durations relative to a fixed scalar calibration obtained from its
training configuration; auxiliary slots retain unit weight. Released-model native-rate sweeps use
per-modality reference cells so that their intended native bias is zero. Split experiments preserve
the reference weights and divide only the duplicated mass. The calibration is part of each
experimental specification; it is not recomputed after the intervention.

A minimal attention implementation is
\begin{verbatim}
scores = q @ k.transpose(-2, -1) / sqrt(head_dim)
scores = transform_scores(scores)   # optional temperature/softcap
scores = scores + log_reference_mass[..., None, :]
scores = scores.masked_fill(~visible, -inf)
output = softmax(scores, dim=-1) @ v
\end{verbatim}
The score transformation acts before the log-measure term. Applying a temperature to
$s+\log m$ changes the exponent on the measure, while softcapping~\citep{team2024gemma2} that sum prevents children from
contributing additive mass. An equivalent implementation multiplies the unnormalized transformed
score weights by $m$ and renormalizes. Masked keys have zero attention weight and do not participate
in physical calibration. Every query must retain at least one visible key.

An auxiliary key with zero \emph{additive bias} has multiplicative weight one. A token with truly
zero integration measure would instead have zero weight and be excluded. Prompt and readout weights
are modeling choices fixed by reference calibration.
Similarly, a per-modality reference shift that leaves within-modality odds unchanged can still
change competition with text keys; cross-modal and whole-row calibration are different operations.

\subsection{Support-integrated position}\label{app:position}
The synthetic implementation allocates a geometric ladder of physical wavelengths to rotary pairs.
For a support distribution $p_i$ centered at $x_i$, define its centered characteristic function
$\phi_i(\omega)=\mathbb E_{X_i\sim p_i}e^{\mathrm i\omega(X_i-x_i)}$. Independent query and key
supports then satisfy
\begin{equation}\label{eq:char}
\E_{y\sim p_i,x\sim p_j}[e^{\mathrm i\omega(x-y)}]
=e^{\mathrm i\omega(x_j-x_i)}\overline{\phi_i(\omega)}\phi_j(\omega).
\end{equation}
The conjugate also covers asymmetric supports.
Let $R_\omega(x)$ denote a rotary pair's rotation through angle $\omega x$.
For symmetric supports, $\phi_i$ is real, and averaging these rotations while holding content fixed
within each support gives
$\widetilde q_i=\phi_i(\omega)R_\omega(x_i)q_i$ and
$\widetilde k_j=\phi_j(\omega)R_\omega(x_j)k_j$. Their inner product is
$\phi_i(\omega)\phi_j(\omega)q_i^\top R_\omega(x_j-x_i)k_j$.
A simultaneous shift of all coordinates leaves this expression unchanged in exact arithmetic.
Carrying coordinates relative to a local origin reduces phase precision loss from large absolute timestamps.

For a uniform interval of width $h$,
$\phi(\omega)=\sinc(\omega h/2)$, with $\sinc z=\sin z/z$ and $\sinc0=1$;
a variance-matched Gaussian gives $e^{-\omega^2h^2/24}$.
If the query is an evaluation point, only the key support is integrated. If both tokens summarize
cells, the two-sided factor is appropriate for the averaged positional logit. These are different
operators. In particular, integration cannot be moved through softmax, so neither expression alone
establishes exact geometric refinement covariance of an attention layer.

Writing $z=\omega h$, the box and Gaussian profiles differ first at fourth order:
\[
\exp(-z^2/24)-\sinc(z/2)=z^4/2880+O(z^6).
\]
For a Gaussian, the half-gain point is $\sqrt{24\log2}$ in the key-only form and
$\sqrt{12\log2}$ in the two-sided form at equal extents. The factor of $\sqrt2$ follows from
which supports are integrated, not from the choice of reference mass. The support-free readout query
has unit positional gain, but intermediate physical queries still use the two-sided implementation.
An end-to-end comparison of the two forms must also change the intermediate query factors.

For multidimensional rectangular supports, the characteristic function is a product of the
axiswise factors. For a Gaussian with covariance $\Sigma_i$, it is
$\exp(-\tfrac12\omega^\top\Sigma_i\omega)$. Equal-volume shapes can have different covariance
and different rotary gains. Overlapping receptive fields and multiscale views require an explicit
support model; their measure cannot be inferred from patch count without that choice.

\begin{figure}[!htbp]
\centering\includegraphics[width=\textwidth]{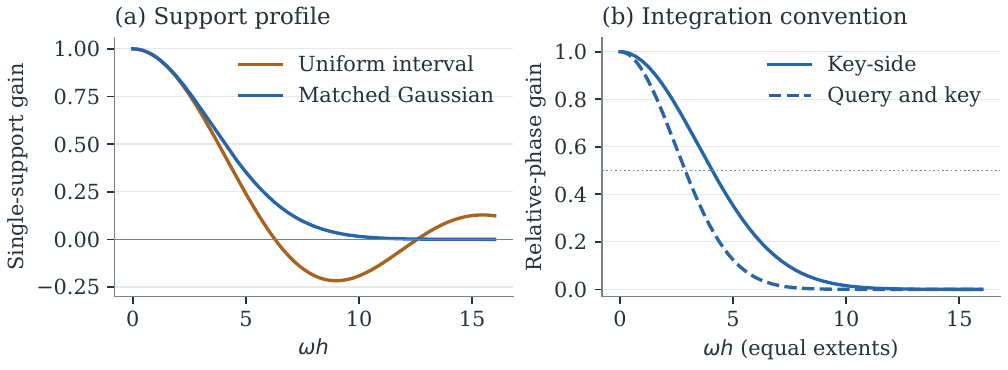}
\caption{\textbf{Two-sided integration squares the support factor and narrows its frequency
response.} A uniform cell gives a sinc factor;
a variance-matched Gaussian gives a smooth nonnegative factor. At equal query/key extents,
the squared single-support Gaussian is the two-sided convention used throughout.
These are analytic response curves.}
\label{fig:position}
\end{figure}

\subsection{Primitive checks and their scope}\label{app:primitives}
The local suite evaluates 34 configurations in float64 under a content-preserving duplicate split.
Its largest absolute output deviation is compared with the appropriate copied or invariant reference.
Table~\ref{tab:primitive_numeric} shows the principal operations. The entries are deterministic
constructed examples; the algebraic argument and the precise operation define the scope of each row.

\begin{table}[!htbp]
\centering\small
\caption{\textbf{Measure-aware operations preserve the tested duplicate splits.} Representative duplicate-refinement checks. Deviations are absolute and use each
operation's own units; their magnitudes should not be compared across rows. The measure-aware
capacity result concerns the constructed admission instance described in the text.}
\label{tab:primitive_numeric}
\begin{tabular}{@{}lrr@{}}
\toprule
Operation & Count form & Measure-aware form \\
\midrule
Attention & $4.7\times10^{-1}$ & $2.2\times10^{-16}$ \\
Mean pooling & $5.5\times10^{-1}$ & $1.1\times10^{-16}$ \\
Per-token loss & $8.4\times10^{-1}$ & $8.9\times10^{-16}$ \\
Token-axis normalization & $4.4\times10^{-1}$ & $6.7\times10^{-16}$ \\
MoE load-balancing loss & $1.3\times10^{-1}$ & $6.7\times10^{-16}$ \\
Expert capacity & $4.4$ & $0$ \\
Log-length score scaling & $1.1\times10^{-1}$ & $4.4\times10^{-16}$ \\
Length-dependent temperature & $7.9\times10^{-2}$ & $4.4\times10^{-16}$ \\
\bottomrule
\end{tabular}
\end{table}

\paragraph{Aggregations.}
A measure-weighted mean is $\sum_i m_i f_i/\sum_i m_i$. Applying it to feature moments gives
refinement-compatible token-axis normalization when content is copied. For the MoE auxiliary loss
$E\sum_e f_e\bar P_e$, replace token fractions and average routing probabilities by
$f_e=\sum_i m_i\mathbf1[r_i=e]/M$ and $\bar P_e=\sum_i m_iP_{ie}/M$, with $M=\sum_i m_i$.
Both quantities are unchanged when copies retain the parent's routing probabilities and assignment.

\paragraph{Budgets.}
Consider six equal-mass tokens routed as $(0,0,0,1,1,1)$ to two experts with count capacity three.
Splitting the first token into two copies creates four requests for expert zero without increasing
its rounded capacity; an untouched token can be rejected. A budget $c\sum_i m_i/E$, charged in
units of each token's mass, preserves admission in this constructed case. Total mass is invariant,
but that fact alone does not make every discrete routing algorithm covariant. At a capacity
boundary, independent all-or-nothing decisions for children can differ from the parent's decision.
A general construction must additionally preserve groups of equivalent copies or define a
consistent allocation of fractional measure.

\paragraph{Visibility and score transforms.}
A fixed number of neighboring tokens does not define a fixed physical window after refinement.
A physical window resolves that mismatch, subject to how boundary cells are integrated.
Causality likewise requires comparing the same visible region, rather than copying a parent answer
to a child with a different prefix. Temperature or softcapping can be applied to the score before
measure weighting. None of these mask or budget choices is addressed by a log-measure bias alone.

\subsection{Floating-point and training reproducibility}\label{app:repro}
All bitwise statements in the experiments concern the recorded float32 inference controls and their
specific execution paths. Algebraic split invariance allows rounding differences when attention
length or kernel dispatch changes. In the single-clip split probe, a constant-bias control and a
zero-value length control directly measure the numerical scale of this effect.

A separate zero-mask control evaluates the same eight clips through the default and dense-mask
attention paths. Float32 logits are bitwise identical on these controls; bfloat16 logits differ on
all eight clips, with relative logit deviations up to approximately $1.1\times10^{-2}$ under the
recorded normalization. This supports an execution-path explanation for sensitivity to an
algebraically zero bias.

The synthetic merge and gain datasets used here record the deterministic math attention backend.
Other trained comparisons are summarized across paired seeds. Pairing reduces variation from
initialization and data order; it neither proves bitwise reproducibility nor supplies extra
independent training runs. Inference-time identities and trained accuracy comparisons are therefore
reported with different uncertainty statements.

\section{Statistical methods}\label{app:statistics}
\subsection{Estimands and units}\label{app:estimands}
For trained synthetic models, a paired observation is the difference between arms with the same
initialization and data seed. When the estimand averages multiple rates, we average within seed
first. For a sample of $n$ paired differences $d_i$, the reported interval is
$\bar d\pm t_{n-1,0.975}s_d/\sqrt n$. Thus repeated rates increase the detail of a response curve,
but do not increase the number of independent training runs. Error bars labeled SE use
$s_d/\sqrt n$; all other uncertainty bars are explicitly labeled 95\% confidence intervals.

For inference on benchmark questions, accuracy compares paired correctness indicators.
The exact two-sided McNemar test conditions on the number of discordant pairs and tests an equal
probability of a gain or loss. An answer changing between two incorrect options does not contribute
to this test. The approximate paired intervals in the tables describe variation across questions;
they do not incorporate training variation or guarantee coverage under within-video dependence.
Where one source video contributes several questions, the deployment contrasts of
Appendix~\ref{app:prospective} therefore cluster at the video level rather than relying on these
question-level tests.
A difference with zero observed variance is an identity in the sample, and a nonsignificant
test establishes no equivalence.

The frame-budget KL endpoint averages the five nonreference budgets within each question.
The main-text interval uses a cluster-robust standard error over 534 source videos, with
533 degrees of freedom. For cluster $g$ with $n_g$ questions and sum $S_g$ of paired differences,
its variance estimate is $\frac{G}{G-1}\sum_g(S_g-n_g\bar d)^2/n^2$, where $G=534$ and $n=540$.
Video identifiers shared by STAR and Charades-STA are grouped together; equal basenames from
different source datasets remain distinct. A percentile bootstrap supplements this interval:
sample 534 video clusters with replacement, retain all questions in each sampled cluster, and
recompute the question-weighted mean. With 20,000 resamples and random seed 9102026, the interval
is $[0.0080,0.0260]$ nats, consistent with the cluster-robust interval
$[0.0077,0.0259]$. The corresponding bootstrap accuracy interval is
$[-1.17,+0.56]$\,pp. WorldSense accuracy intervals in
Table~\ref{tab:worldsense} use the same bootstrap procedure over 536 source-video clusters,
with benchmark identifiers joined to the dataset metadata. Other benchmark intervals are
conditional on the evaluated question set and its sampling assumptions.

\subsection{Multiplicity and exploratory results}\label{app:multiplicity}
The controlled studies use a retrospective family of 54 contrasts, with Bonferroni-adjusted
$p$-values $\min(54p,1)$ and nominal threshold $0.05/54 = 9.26\times10^{-4}$.
This family provides supportive evidence. It contains every inferential contrast of the
controlled studies in the analysis registry: the measure-exponent identifiability battery ($9$ contrasts);
the cross-rate factorial scope, mechanism, resolution-floor, and seed-level tests ($17$); the
token-merging, extent-decomposition, and transfer comparisons ($12$); the learned-bias and
loss-weighting controls ($4$); the WorldSense calibration doses ($3$); the released-model
cross-panel equivalences ($6$); the natural frame-budget drift reduction ($1$); and the
lattice-dose correlation ($2$). Each benchmark's deployment evaluation uses a separate
four-contrast family and primary budget specified in the report generator before data collection
(Appendix~\ref{app:deployment_inference}). Cellwise intervals, ablations, window selection,
and extended benchmark probes are exploratory unless explicitly stated otherwise.

For the factorial, the 17 ascents describe adjacent differences in the \emph{mean} response.
The inferential statistic instead averages each seed's number of strict ascents over its
20 adjacent comparisons. We enumerate all $2^{10}$ sign changes of the seedwise gain matrices,
recompute that statistic, and use the upper-tail fraction. Ties remain ties under sign reversal.
The result is $p = 2/1024 \approx 0.00195$, which exceeds the conservative threshold.
The sign-flip test assumes exchangeability of arm labels within seed under its null.
It does not treat adjacent cells as independent trials.

The six-seed attention-slope test gives nominal $p = 7.64\times10^{-5}$ against zero
(adjusted $p = 0.0041$). This supports a nonzero response, not equivalence of the slope to one.
The frame-budget KL difference gives nominal $p = 2.90\times10^{-4}$ paired over questions
($2.96\times10^{-4}$ under the video-clustered primary test; adjusted $p = 0.016$ either way).
The eight-seed full-model versus shared-clock transfer comparison gives nominal $p = 0.020$,
above the family threshold. Each result addresses its specified endpoint and comparison.

\subsection{Independent factorial replication}\label{app:factorial_replication}
The factorial panel of Appendix~\ref{app:synthetic} was evaluated with thirty independent
training seeds (seeds 10--39, ninety arm--seed fits) under the same training recipe as the
ten-seed panel. Its analysis family is separate from the 54 controlled-study contrasts and
the four deployment contrasts; their multiplicity budgets do not cover this replication.
Nineteen of the twenty adjacent steps of the mean gain increase with the rate ratio, against
seventeen in the ten-seed panel, and the exact sign-flip test on seedwise ascents gives
$p = 9.3\times10^{-10}$, the attainable minimum at thirty seeds (ten seeds: $p = 2/1024$).
The pooled gain at equal rates is $+0.52$\,pp with 95\% CI $[-0.65,+1.68]$,
an unresolved difference (ten seeds: $+0.09$\,pp). At ratio 32 the gain is
$+16.8$\,pp with 95\% CI $[+13.5,+20.2]$, which contains the ten-seed point
estimate of $+14.2$. Regressing the cellwise gain jointly on the theoretical dose
$\mathrm{std}(\log m)$ and on $\log(1/r_{\mathrm{slow}})$ gives a dose coefficient of
$+16.7 \pm 2.1$\,pp and a coarseness coefficient of $+0.4 \pm 0.6$\,pp (mean $\pm$ SE;
ten seeds: $+13.5 \pm 5.1$ and $+1.3 \pm 1.0$). Gain increases with dose in this regression;
the smaller coarseness estimate does not establish a zero effect. Cellwise mean gains of the
two panels correlate at $r = 0.981$.

A secondary comparison against learned per-modality bias in the thirty-seed panel also gives
nineteen of twenty ascents and exact sign-flip $p = 1.9\times10^{-9}$. Its equal-rate gain is
$+1.15$\,pp with 95\% CI $[+0.28,+2.03]$ (nominal $p = 0.012$), and its coarseness coefficient
is $+1.44 \pm 0.56$\,pp (SE; nominal $p = 0.016$). Thus the learned-bias comparison includes
both a gain at equal rates and an association with absolute resolution; rate imbalance alone
does not account for it.

\subsection{Observations and exclusions}\label{app:exclusions}
Numerical summaries are rebuilt from saved observations, with input paths and SHA-256 hashes
recorded in the analysis manifest. Paired records are joined by seed or question identity;
overlapping shards are counted once. The figure data record the analyzed sample sizes.
In the large duplication evaluation, 3,586 of 3,600 eligible
questions have usable predictions from 18 tasks; 14 questions with unavailable media are excluded before paired
comparisons. The two-region, frame-budget, WorldSense, and multiscale summaries use complete
records at their stated sample sizes.

The deterministic quadrature illustration is regenerated from its analytic functions, with
adaptive integration for reference values. Floating-point primitive and split controls are
numerical checks of algebraic statements, so they receive tolerances and maximum deviations
rather than confidence intervals. The accompanying data manifest and scripts distinguish
these constructions from model observations.

\section{CPU experiments on natural partitions}\label{app:cpu}
\subsection{A fixed nonlinear attention functional}\label{app:cpu_generator}
The domain is the unit circle with Lebesgue measure. We generate 512 fields with NumPy seed
2026091017. On a 4,096-cell uniform grid with midpoint coordinates $x$, the signal is
\[
 f(x)=b+\sum_{e=1}^8a_e\exp[-d_{\rm circ}(x,c_e)^2/(2\sigma_e^2)],
\]
where $b\sim\mathcal N(0,0.35^2)$, $a_e\sim\mathcal N(0,1)$,
$c_e\sim U[0,1)$, and $\sigma_e\sim U[0.015,0.12]$, independently.
The nuisance channel is $u(x)=\sum_{e=1}^8z_e\sin(2\pi\nu_ex+\theta_e)$ with
$z_e\sim\mathcal N(0,0.15^2)$, integer $\nu_e$ uniform on $\{1,\ldots,16\}$, and
$\theta_e\sim U[0,2\pi)$. We set
\[
 s(f,u)=1.2\tanh f+0.2u,\qquad v(f,u)=\tanh(f-0.4u).
\]
All partitions average $f$ and $u$ over contiguous groups of the same fine cells before applying
these nonlinear maps. Hence changing a cell changes both its score and value. The reference
is the measure-weighted attention of the fine grid itself; no claim of exact continuous integration
is needed for this comparison.

For each field and budget $N\in\{16,32,64,128\}$, we form three complete partitions.
Uniform boundaries divide the grid equally. Activity-driven boundaries are equal quantiles of
$1+8|f|$. Variation-driven boundaries are equal quantiles of $1+8d/\bar d$, where
$d_j=|f_{j+1}-f_j|$ with circular indexing. We locate quantiles by cumulative sums, constrain each
cell to contain at least one fine cell, and retain the exact resulting boundaries. Thus all arms
at a given budget share token contents and full domain coverage. Only the attention weights differ.
For one modality, inverse count multiplies every weight by the same scalar and agrees with standard
attention exactly. Cell mass is the number of fine cells divided by 4,096.

\subsection{Outcomes and coupled verification}\label{app:cpu_results}
Every field contributes twelve observations, which are retained with its identifier and partition
boundaries. Figure~\ref{fig:stability} reports means and Student intervals across the 512 fields
separately at each configuration; it does not pool repeated budgets as independent draws.
The score/value map is fixed; this experiment makes no trained-model or task-accuracy claim.
A fixed zero-threshold decision can also be compared with the dense reference; the saved records
include all decisions, but operator error is the reported endpoint.

To evaluate Theorem~\ref{thm:stability}, lift the coarse score and value back to every fine cell.
The paired value error is their maximum absolute difference from the fine-grid values, and $\Delta$
is the range of the paired score differences. The reference-value range gives $D_v$.
All 6,144 comparisons satisfy the bound. The bound is worst-case: the largest ratio of observed
error to bound is approximately 0.10. Tightness is attained by the two-atom construction of
Appendix~\ref{app:stability}, not on these signals.

At 32 cells, uniform standard and measure weighting both have mean absolute error 0.0021.
Activity-driven errors are 0.1524 and 0.0076, respectively; variation-driven errors are 0.1005 and
0.0020. At 128 cells the adaptive standard errors are 0.1531 and 0.1014, while measure errors are
0.0008 and 0.0005. The discrete fine reference is fixed throughout. All fields, including ones
near the optional decision boundary, remain in the sample.

\subsection{Mathematical numerical checks}\label{app:cpu_checks}
A separate CPU verifier checks arbitrary sparse mass couplings, unequal exact and approximate
splits, sharp two-atom cases, support-Wasserstein perturbations, six-layer propagation, and the
partial-cell visibility and overlapping-cover constructions. It includes negative controls for
count weighting, causal-prefix mismatch, and nonlinear averaging of overlapping features.
For centered asymmetric supports, the general characteristic-function formula agrees with direct
summation to $8.4\times10^{-17}$, while dropping the conjugate gives error approximately 0.155.
These constructed checks supplement the proofs; they are not samples of deployed-model performance.

\section{Controlled temporal experiments}\label{app:synthetic}
\subsection{Signal generation and optimization}\label{app:synthetic_setup}
The timeline is a circle of duration $D=4$\,s. Draw a gap $g$ uniformly from $[0.70,2.00]$\,s,
a center $c$ uniformly on the circle, and an independent sign $\epsilon\in\{-1,1\}$.
Place one event in each modality at $(c-\epsilon g/2)\bmod D$ and
$(c+\epsilon g/2)\bmod D$. The target is
$\lfloor5(g-0.70)/1.30\rfloor$, clipped to classes $0,\ldots,4$.
Each event is a wrapped Gaussian bump with standard deviation 0.10\,s. Token features contain
the cell-average signal and modality indicators; center, extent, and index coordinates are
provided to the corresponding positional components. There are no distractor events.
Wrapping and uniform placement make marginal event position independent of the label.

The encoder uses four layers, six attention heads, width 192, and a five-class readout
(1,773,125 parameters in the basic configuration). Training uses batch size 32 for 1,500 steps,
AdamW with learning rate $10^{-3}$, weight decay 0.01, $\beta=(0.9,0.95)$, a one-cycle schedule,
and gradient-norm clipping at one. The continuum rotary wavelength ladder spans 0.02--40\,s.
The standard positional baseline uses separate modality indices. Other arms introduce a learned
modality bias, physical coordinate features, a shared quantized clock, continuum RoPE, or
measure-weighted attention. The same seed identifies initialization and training-example streams
within each paired experiment.

\paragraph{Label recoverability.}
A template estimator searches event positions on a 5\,ms grid, using wrapped, box-averaged
Gaussian templates and profiling out nonnegative amplitude. On 1,500 examples at each of the
eight transfer rates, it achieves approximately 99.1\% accuracy throughout. This provides an
achievable reference under the specified generator, including cells wider than the event bump;
it is not a Bayes bound.
The remaining errors occur near quantization boundaries. Non-temporal summary-feature probes
score 17.7--19.7\% against a five-class chance level of 20\% in the recorded shortcut checks.

\subsection{Cross-rate transfer}\label{app:transfer}
Training samples the audio/video rate pairs $(8,4)$, $(8,8)$, and $(4,4)$ tokens/s.
Evaluation adds $(16,8)$, $(25,2)$, $(32,16)$, $(3,2)$, and $(50,25)$.
Table~\ref{tab:transfer} and Figure~\ref{fig:transfer} average those five held-out rates within seed.
The component comparison uses five seeds and the stronger-baseline comparison eight, as indicated
in Table~\ref{tab:transfer}; inferential comparisons use the paired subsets.

\begin{figure}[!htbp]
\centering\includegraphics[width=\textwidth]{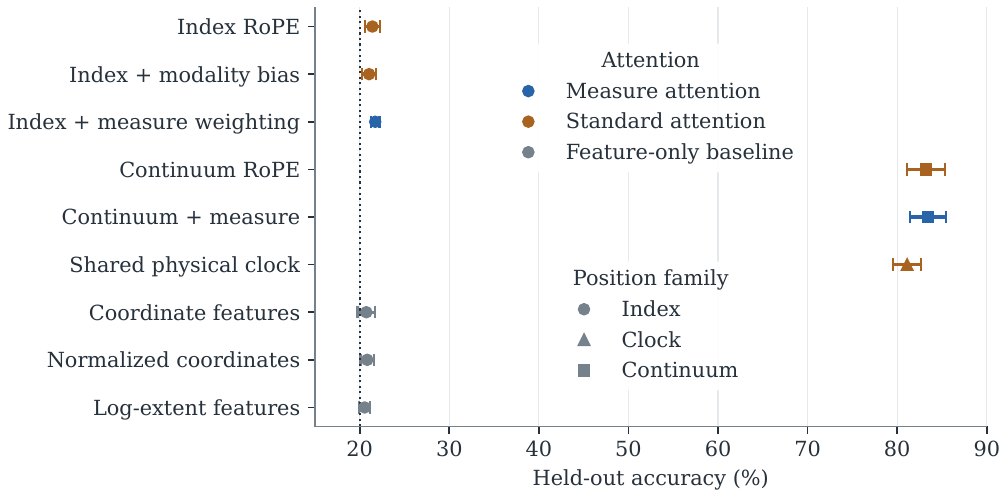}
\caption{\textbf{Physical position drives cross-rate transfer.} Points and error bars show seed
means and 95\% intervals for held-out accuracy. Blue denotes measure weighting, gold standard
attention, and grey feature baselines; marker shape encodes the positional family. The dotted
line marks chance. Sample sizes and in-distribution scores appear in Table~\ref{tab:transfer};
comparisons between independently listed summaries are descriptive.}
\label{fig:transfer}
\end{figure}

\begin{table}[H]
\centering\small
\caption{\textbf{Physical clocks carry cross-rate accuracy.} Training rates are $8\times4$, $8\times8$, and $4\times4$ tokens/s; the held-out column averages five other rates within each seed before computing its 95\% interval. The eight-seed paired clock comparison is reported separately.}
\label{tab:transfer}
\begin{tabular}{@{}lrrrr@{}}
\toprule
Arm & Seeds & Trained (\%) & Held out (\%) & 95\% CI \\
\midrule
Index RoPE & 5 & 92.5 & 21.4 & $[20.6, 22.2]$ \\
Index + modality bias & 5 & 93.1 & 21.1 & $[20.2, 21.9]$ \\
Index + measure weighting & 5 & 91.8 & 21.7 & $[21.2, 22.2]$ \\
Continuum RoPE & 5 & 97.2 & 83.2 & $[81.1, 85.3]$ \\
Continuum + measure & 5 & 97.1 & 83.4 & $[81.4, 85.5]$ \\
Shared physical clock & 8 & 97.4 & 81.1 & $[79.6, 82.7]$ \\
Coordinate features & 8 & 81.0 & 20.7 & $[19.8, 21.7]$ \\
Normalized coordinates & 8 & 89.3 & 20.8 & $[20.1, 21.6]$ \\
Log-extent features & 8 & 88.0 & 20.6 & $[19.9, 21.2]$ \\
\bottomrule
\end{tabular}
\end{table}

The matched eight-seed comparison gives a full-model minus shared-clock gain of 2.94\,pp, with
95\% CI $[0.61,5.26]$ and nominal $p = 0.020$. It does not meet the conservative
multiplicity threshold. A clock-resolution sweep also shows that finer ticks need not improve
task accuracy monotonically. For example, the recorded mean held-out scores at a 0.04\,s tick
are 81.4\% with rounding and 84.4\% with flooring; the continuum arm scores 83.8\% in that
experiment. At a 0.5\,s tick the corresponding clock scores are 68.8\% and 83.8\%.
These responses are specific to this task and positional convention.

\subsection{Factorial uncertainty}\label{app:factorial}
The $4\times6$ factorial crosses slow rates $r_s\in\{2,4,8,16\}$ with ratios
$q\in\{1,2,4,8,16,32\}$, using $(r_a,r_v)=(qr_s,r_s)$.
Both compared arms use continuum RoPE; only measure weighting differs. Ten paired seeds provide
cellwise estimates in Table~\ref{tab:factorial_intervals}. A separate six-seed log-odds
measurement supplies the slope test of Appendix~\ref{app:statistics}.
Training covers ratios one and two. The larger-ratio response is thus also a transfer result.
Increasing $q$ raises sequence length as well as imbalance; the factorial holds the slow rate
fixed within each row but cannot make length independent of the two rate variables.

\begin{table}[!htbp]
\centering\footnotesize
\caption{\textbf{Cellwise gains across rate imbalance.} Cellwise accuracy gain from measure weighting, in pp. Each interval uses ten paired seeds and is pointwise, not simultaneous across cells.}
\label{tab:factorial_intervals}
\begin{tabular}{@{}rrrrrrrr@{}}
\toprule
Slow rate & Ratio & Gain & 95\% CI & Slow rate & Ratio & Gain & 95\% CI \\
\midrule
2 & 1 & -1.25 & $[-8.19,+5.69]$ & 8 & 1 & +0.20 & $[-0.05,+0.44]$ \\
2 & 2 & +0.43 & $[-3.36,+4.22]$ & 8 & 2 & +0.14 & $[-0.34,+0.63]$ \\
2 & 4 & +2.40 & $[-0.62,+5.42]$ & 8 & 4 & +1.07 & $[-1.21,+3.34]$ \\
2 & 8 & +5.22 & $[+2.83,+7.61]$ & 8 & 8 & +3.75 & $[-2.74,+10.24]$ \\
2 & 16 & +10.47 & $[+5.53,+15.40]$ & 8 & 16 & +7.68 & $[-1.65,+17.02]$ \\
2 & 32 & +15.94 & $[+9.88,+22.00]$ & 8 & 32 & +12.45 & $[+1.62,+23.28]$ \\
4 & 1 & +1.51 & $[-1.54,+4.56]$ & 16 & 1 & -0.09 & $[-1.01,+0.83]$ \\
4 & 2 & +0.35 & $[-1.05,+1.75]$ & 16 & 2 & -0.76 & $[-2.96,+1.45]$ \\
4 & 4 & +1.42 & $[+0.36,+2.48]$ & 16 & 4 & +0.44 & $[-2.05,+2.94]$ \\
4 & 8 & +5.00 & $[+0.12,+9.88]$ & 16 & 8 & +3.27 & $[-1.56,+8.09]$ \\
4 & 16 & +11.56 & $[+1.87,+21.25]$ & 16 & 16 & +6.48 & $[-1.21,+14.18]$ \\
4 & 32 & +18.84 & $[+7.73,+29.95]$ & 16 & 32 & +9.69 & $[-0.02,+19.39]$ \\
\bottomrule
\end{tabular}
\end{table}

\subsection{Token merging and support extent}\label{app:merge}
At inference, merge adjacent same-modality cells at factors $1,2,4,8$. The merged representation
uses measure-weighted feature averaging, its center is the measure-weighted center, and its
extent and measure cover the merged interval. The shared-clock and continuum arms use the same
physical centers. Each support profile is fitted separately with standard attention and matched
initialization and data seeds; merging is then applied at inference. Figure~\ref{fig:merge}
separates position from attention weighting and then compares the extent profiles across eight seeds.

\begin{figure}[!htbp]
\centering\includegraphics[width=\textwidth]{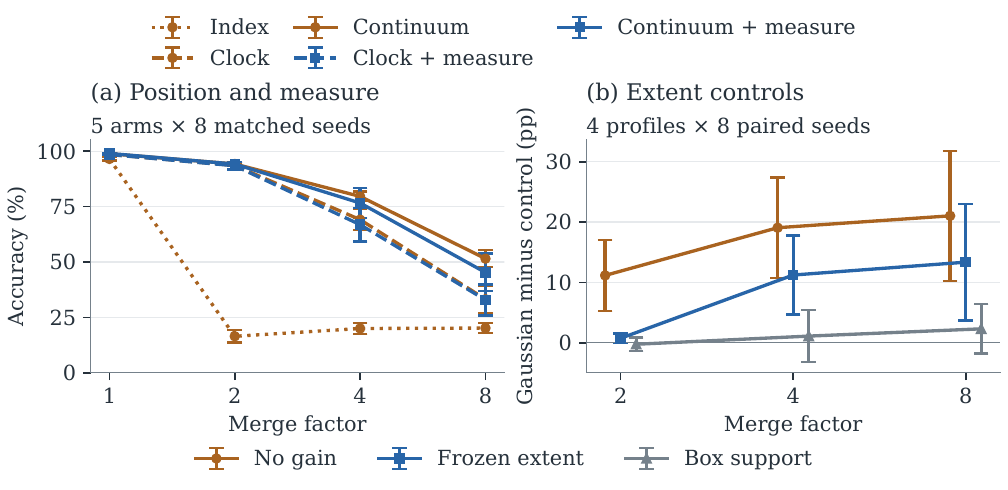}
\caption{\textbf{Support-aware positional models retain more accuracy under merging.} (a) Accuracy at each merge factor
for five positional and attention configurations, matched across eight seeds. (b) Paired gain of
a width-tracking Gaussian over no attenuation, a Gaussian retaining the original extent, and a
width-tracking box profile. These four profiles are separately trained with standard attention
using eight matched initialization and data seeds; merging is applied at inference. Error bars are
95\% intervals. In (a), blue denotes measure weighting and gold standard attention; line style
encodes the positional family.}
\label{fig:merge}
\end{figure}

Models trained without extent attenuation have lower accuracy by 11.1, 19.1, and 21.0\,pp at factors
2, 4, and 8. Keeping the original extent reduces accuracy by 0.7, 11.2, and 13.3\,pp.
Gaussian-minus-box differences are $-0.3$, $+1.1$, and $+2.3$\,pp, with nominal paired
$p = 0.61$, $0.57$, and $0.22$. The evidence favors tracking represented extent on this task;
it does not identify a uniquely superior support distribution. In the uniform same-modality merge
comparison, counts and measures remain closely linked, so no benefit of measure weighting is
expected or resolved.

\section{Released-model interventions}\label{app:released}
\subsection{Rate-dependent attention allocation}\label{app:rate_panel}
The Qwen2.5-Omni-7B panel evaluates 16 clips at video rates $1,2,4,8,16,32$\,fps with audio
tokenization fixed. The model weights are unchanged. Attention odds and shares are recorded for
the same probe under standard attention and a native-calibrated log-measure bias.
Video share is $V/(V+A)$, conditional on attention to audio or video. Within each layer,
$V$ and $A$ are averaged over heads and the final eight prompt queries; shares and odds are then
averaged over layers. Each clip contributes one slope from regressing log odds on log video rate. Mean slopes are
$0.4646$ (95\% CI $[0.4391,0.4902]$) and $0.0536$ ($[0.0332,0.0740]$), respectively.
The relative slope reduction is approximately 88\%.
The positive residual reflects sensitivity beyond the counting term; the correction does not
make resampled video representations identical. Figure~\ref{fig:rate-allocation} reports clip-level
uncertainty for the attention shares rather than treating heads as independent samples.

\begin{figure}[!htbp]
\centering\includegraphics[width=\textwidth]{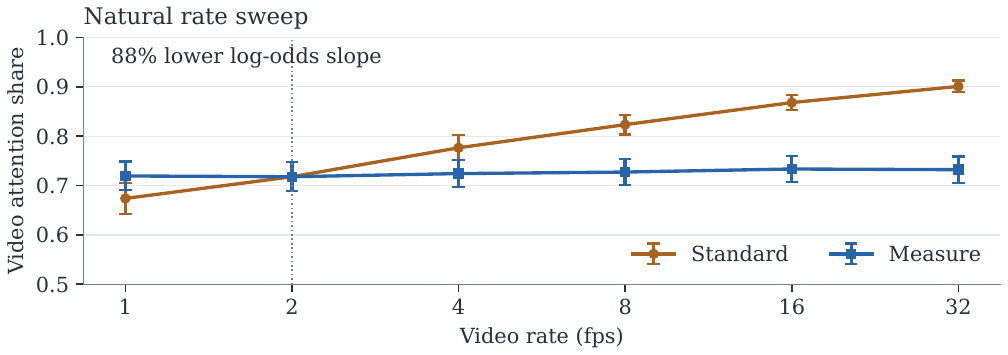}
\caption{\textbf{Measure weighting reduces rate-dependent attention allocation.}
Qwen2.5-Omni-7B's video share of audio/video attention, averaged over 16 clips with fixed audio tokenization.
Error bars show 95\% $t$ intervals across clips. Orange circles denote standard attention;
blue squares denote native-calibrated measure weighting. The dotted line marks the 2\,fps
calibration rate.}
\label{fig:rate-allocation}
\end{figure}

\subsection{Duplication with controlled visibility}\label{app:duplication}
For a fixed query, duplicate a selected key/value representation $k$ times, keep its positional
representation, and assign each copy $1/k$ of the parent's reference mass. In a multilayer model,
copies must also have the parent's visible context and appear identically to later queries.
The matched-visibility intervention enforces these conditions. Its standard-attention control
duplicates the representation without reducing its weight.

The float32 full-modality probe uses $k=2,3,4$ on one clip. Standard attention multiplies the
measured audio/video competition by the predicted factor; measure-weighted logit deviations remain
within 0.91--1.19 times the numerical control floor. A single clip supplies an implementation
check, not a precision estimate.

The larger MVBench experiment duplicates a contiguous half of the visual representations while
preserving the other half. Of 3,600 eligible questions, 3,586 have usable paired predictions;
14 questions have unavailable media.
At $k=2$, standard attention changes 150 answers: 63 correct-to-incorrect, 55 incorrect-to-correct,
and 32 between incorrect options. At $k=3$, the corresponding counts are 255, 111, 90, and 54.
Measure weighting changes no answers at either factor under matched visibility. The exact
McNemar $p$-values for standard attention's accuracy change are 0.52 and 0.16.
Measure-weighted logits need not be bitwise identical even when answer identity is preserved.

A strictly causal control uses 720 questions. Later copies have longer prefixes than earlier
copies, so the duplicated group no longer has one shared visible context. The measure-weighted
arm changes four answers (0.56\%). This illustrates the theorem's visibility condition; a causal
refinement must specify the physical horizon at which an output is evaluated.

\paragraph{Audio-Flamingo-3.}
The second architectural probe operates on the released language backbone using 64 audio slots,
24 text slots, and 40 independent representation draws in float32. It duplicates the audio slots
at factors 2, 3, and 4. Thus it evaluates the fusion computation on controlled representations,
not the end-to-end accuracy of the audio encoder. Under matched visibility it exhibits the same
odds scaling and its removal by measure weighting. Strictly causal copies again change the
operator; their remaining deviations reflect the changed visible context, consistent with the
visibility condition.

\subsection{Unequal sampling and count controls}\label{app:unequal}
The two-region experiment uses 40 questions from each of 18 MVBench tasks. Each clip supplies
16 frames, with $10/6$, $12/4$, or $14/2$ frames assigned to equal-duration halves.
Spatial patching is fixed. The first-layer dense/sparse odds ratio of measure weighting (MC) to
standard attention (std) satisfies
\[
\frac{O_\mc}{O_\std}=\frac{m_D}{m_S}
\in\left\{\frac6{10},\frac4{12},\frac2{14}\right\}
\]
because scores are shared at the intervention. Of 60,480 clip--head--allocation observations,
60,479 have relative error below 1\%; the remaining case has extreme odds and is sensitive to
floating-point division. A single count for the whole video modality cannot distinguish these
allocations. A count rule given the two equal-span groups can reproduce their masses; the result
therefore separates measure weighting from a \emph{per-modality} count rule, not every possible
geometrically informed estimator of density.

More generally, if measures vary within two probe groups, the odds ratio becomes
\[
\frac{O_\mc}{O_\std}
=\frac{\sum_{j\in D}p_{j\mid D}m_j}{\sum_{j\in S}p_{j\mid S}m_j},\qquad
p_{j\mid G}=\frac{e^{s_j}}{\sum_{k\in G}e^{s_k}}.
\]
It is bounded by $\min_Dm/\max_Sm$ and $\max_Dm/\min_Sm$, but generally depends on scores.
In a geometric-width schedule on 720 clips, all 40,320 head--allocation observations lie
inside these bounds: approximately $[0.250,0.820]$ and $[0.125,0.743]$ for the two tested schedules.
An inverse local neighbor count can approximate density under suitable sampling assumptions;
it is not exactly additive under arbitrary representation duplication because the neighboring
counts of unsplit cells can also change.

\subsection{Natural frame-budget changes}\label{app:frame_budgets}
The 540-question budget experiment uses $4,8,16,24,32,48$ frames from the same clip and a
16-frame reference answer distribution. Answer logits are restricted to the question's options
and normalized before computing $\operatorname{KL}(p_{16}^{\rm std}\|p_b^a)$ for arm $a$ and
budget $b$. Both arms share the native standard-attention reference; the endpoint averages the
five nonreference budgets within each question. The physical-cell measure bias and the reference-calibrated
count bias are equal at all 3,240 question--budget evaluations.
Figure~\ref{fig:budgets} shows both paired differences and the distribution at the largest budget.

The mean reduction is 0.0168 nats. The mean accuracy change over the five nonreference budgets is
$-0.30$\,pp (95\% video-clustered CI $[-1.16,+0.56]$; nominal $p = 0.50$).
At least one budget changes the answer on 195 questions with standard attention and 186 with
measure weighting. There are 23 measure-only and 32 standard-only changes, giving exact
McNemar $p = 0.28$. The distributional reduction is resolved; accuracy and answer-consistency
differences remain unresolved.

\subsection{WorldSense calibration}\label{app:ws_calibration}
Each arm evaluates the same 600 questions at $2,0.5,0.25$\,fps. The native-calibrated measure
arm uses video bias $\log(2/\mathrm{fps})$ based on the realized frame rate, with the audio reference fixed. The calibrated count
arm uses the matching reference counts.
The uncalibrated count arm instead changes the relative modality prior even at the reference
rate. A constant-bias control, held at its reference-rate value across the sweep, changes
42, 28, and 34 answers. Table~\ref{tab:worldsense} reports answer changes and paired accuracy
differences separately. At the native rate, an algebraically zero bias may
still select a different low-precision execution path (Appendix~\ref{app:repro}).

\begin{table}[!htbp]
\centering\footnotesize
\caption{\textbf{Calibrated attention under WorldSense rate changes.} Each rate evaluates the same 600 questions. The change columns count answers differing from standard attention for measure weighting (MC) and for the uncalibrated and calibrated count arms. Calibrated count and measure weighting give identical answers. Accuracy differences (measure minus standard) are in pp with 95\% bootstrap intervals grouped by source video.}
\label{tab:worldsense}
\begin{tabular}{@{}rrrrrr@{}}
\toprule
fps & Standard (\%) & MC & Count (uncal.) & Count (cal.) & $\Delta$ accuracy [95\% CI] \\
\midrule
2 & 34.7 & 9 & 65 & 9 & $+0.00\;[-0.66,+0.67]$ \\
0.5 & 36.2 & 48 & 18 & 48 & $-0.67\;[-2.51,+1.17]$ \\
0.25 & 33.3 & 70 & 22 & 70 & $+0.50\;[-1.81,+2.84]$ \\
\bottomrule
\end{tabular}
\end{table}

\subsection{Generality across model scale}\label{app:qwen3b}
The probes above were repeated on a second checkpoint, Qwen2.5-Omni-3B, using the identical
harness, question sets, and aggregation as the 7B study; the checkpoint is the only change.
Table~\ref{tab:qwen3b} places the two checkpoints side by side. On the 3{,}586-question
duplication panel, standard attention changes 140 answers at split factor two and 236 at factor
three, while the measure-weighted arm changes none, with a maximal answer-distribution
perturbation of $1.3\times10^{-9}$ nats against the un-duplicated reference; the $k=1$ null
control is exactly zero in both arms. The strictly causal control leaves the same small residual
as on the 7B: four and five measure-arm flips of 720 at factors two and three. On the
540-question frame-budget ladder, the measure term reduces
$\operatorname{KL}(p_{16}^{\rm std}\|p_b^a)$, averaged over the five nonreference budgets,
by 0.0154 nats per question (95\% interval $[0.0082,0.0226]$,
cluster-robust over 534 source videos), with paired accuracy and any-flip differences of
$-0.19$\,pp ($p = 0.72$) and $-2.22$\,pp ($p = 0.11$). As on the 7B, the calibrated count arm
coincides with the measure arm numerically under these uniform rate changes. These measurements
describe the 3B checkpoint only; as single-run probes they are descriptive and enter no
confirmatory family.

\begin{table}[!htbp]
\centering\small
\caption{\textbf{Duplication and resampling across model scales.} Probes on
Qwen2.5-Omni-7B and Qwen2.5-Omni-3B under the identical harness. Flips count changed answers
against each model's own un-duplicated reference; std is standard attention, mc the
measure-weighted arm. The KL row reports standard minus measure-weighted
$\operatorname{KL}(p_{16}^{\rm std}\|p_b^a)$, averaged over the five nonreference
budgets, with 95\% video-clustered intervals. Both arms share the standard-attention reference.}
\label{tab:qwen3b}
\begin{tabular}{@{}lrr@{}}
\toprule
Probe & 7B & 3B \\
\midrule
Duplication std flips at factor two (of 3{,}586) & 150 & 140 \\
Duplication std flips at factor three & 255 & 236 \\
Duplication mc flips at factors two and three & 0, 0 & 0, 0 \\
Causal-copy mc flips at factors two and three (of 720) & 4, 4 & 4, 5 \\
Budget KL reduction, nats [95\% interval] & 0.0168 & 0.0154 \\
 & [0.0077, 0.0259] & [0.0082, 0.0226] \\
\bottomrule
\end{tabular}
\end{table}

\section{Extended evaluations}\label{app:extended}
\subsection{Content-adaptive video tokenization}\label{app:adaptive}
Three tokenizers select cells from the same video interval at a fixed total budget.
The motion tokenizer partitions cumulative inter-frame difference; the entropy tokenizer uses
per-frame spatial entropy; the keyframe tokenizer places boundaries at large motion peaks.
Each token carries its realized temporal extent. The median within-clip extent ratio is about
three, with a 90th percentile of 8.5. Covered duration, budget, and mean declared frame rate are
matched across the three tokenizers, while selected frames can differ.

The evaluation contains 432 questions from 17 MVBench tasks. Consistency is the fraction of
questions whose chosen answer agrees under all three tokenizations. Table~\ref{tab:adaptive} compares standard
attention, modality count, measure weighting, and local count.
The local-count window is chosen descriptively from
$\{0.1,0.2,0.35,0.5,0.75,1,1.5,2\}$\,s; its best observed consistency occurs at 0.75\,s, a choice
made on the same data and therefore optimistically biased. The resulting consistency differences
are unresolved.

\begin{table}[!htbp]
\centering\footnotesize
\caption{\textbf{Consistency across content-adaptive tokenizations.} Evaluation on 432 MVBench questions. Consistency is agreement across all three tokenizers; accuracy is averaged within question over tokenizers. The interval and nominal exact McNemar test compare consistency with standard attention. The local-count window is selected descriptively from eight tested windows.}
\label{tab:adaptive}
\begin{tabular}{@{}lrrrr@{}}
\toprule
Arm & Consistency (\%) & Accuracy (\%) & $\Delta$ consistency (pp) & $p$ \\
\midrule
Standard & 86.1 & 67.7 & $+0.00$ & 1.000 \\
Modality count & 87.0 & 68.9 & $+0.93\;[-1.73,+3.58]$ & 0.608 \\
Measure & 87.0 & 68.4 & $+0.93\;[-1.88,+3.73]$ & 0.627 \\
Local count (0.75 s) & 86.6 & 68.1 & $+0.46\;[-1.47,+2.39]$ & 0.815 \\
\bottomrule
\end{tabular}
\end{table}

\subsection{LoRA fine-tuning on MVBench}\label{app:trained}
LoRA rank 16 adapts the query, key, value, and output projections of all 28 text-decoder layers
of Qwen2.5-Omni-7B. Base and measure-weighted arms share training examples and seed within each
comparison. The 120- and 360-question evaluations use 400 training examples and 200 steps;
the larger 2,786-question evaluation uses 600 examples and 300 steps. Training uses AdamW with
learning rate $10^{-4}$ and a one-cycle schedule. Each evaluation compares ratios $q=1$ and $q=8$.
Table~\ref{tab:trained} reports the paired correctness counts and actual evaluation denominators.
Each slice is question-disjoint between training and evaluation, but not source-video-disjoint:
75 of the 589 training videos recur in the 2{,}786-question evaluation (covering 78 of its 600
training questions), and two and one training videos recur in the 120- and 360-question
evaluations, respectively. These exploratory estimates are conditional on the stated splits; the primary deployment
evaluation uses disjoint source videos (Appendix~\ref{app:prospective}).

\begin{table}[!htbp]
\centering\footnotesize
\caption{\textbf{Accuracy after low-rank adaptation.} LoRA evaluations on MVBench. The 120- and 360-question slices use 400 training examples and 200 steps; the 2,786-question slice uses 600 examples and 300 steps. Each is one trained seed per arm. Intervals describe paired question variability conditional on those checkpoints. W/L counts correctness gains/losses for measure weighting; $p$ is nominal exact McNemar.}
\label{tab:trained}
\begin{tabular}{@{}rrrrrrr@{}}
\toprule
$n$ & $q$ & Standard (\%) & MC (\%) & $\Delta$ (pp) [95\% CI] & W/L & $p$ \\
\midrule
120 & 1 & 61.67 & 61.67 & $+0.00$ & 0/0 & 1.000 \\
120 & 8 & 61.67 & 64.17 & $+2.50\;[-2.45,+7.45]$ & 6/3 & 0.508 \\
360 & 1 & 64.44 & 63.61 & $-0.83\;[-1.78,+0.11]$ & 0/3 & 0.250 \\
360 & 8 & 63.89 & 62.78 & $-1.11\;[-3.79,+1.57]$ & 10/14 & 0.541 \\
2786 & 1 & 62.42 & 62.67 & $+0.25\;[-0.11,+0.62]$ & 17/10 & 0.248 \\
2786 & 8 & 62.31 & 63.03 & $+0.72\;[-0.16,+1.59]$ & 87/67 & 0.125 \\
\bottomrule
\end{tabular}
\end{table}

On 2,786 questions, the $q=8$ difference is $+0.72$\,pp, with 95\% CI
$[-0.16,+1.59]$ and nominal exact McNemar $p = 0.125$. The 120-question $q=1$ slice has
identical recorded correctness indicators in the two arms. An inference-time zero-bias identity
does not require separately optimized models to have identical parameters.
With one trained seed per arm, these intervals characterize question variability only.

\subsection{Modality weights in a masked-token objective}\label{app:loss}
A separate task samples a shared piecewise-constant latent signal on a four-second timeline.
Masked codes in either modality can be predicted using the other modality's observations of
the same interval. Video rate is fixed at four tokens/s and audio rate is $q$ times larger,
with $q\in\{1,2,4,8,16\}$. For each objective, nine audio weights
$w\in\{1/16,1/8,1/4,1/2,1,2,4,8,16\}$ and five seeds give 225 runs.
Training uses 1,500 steps, batch size 32, and learning rate $10^{-3}$.
Both objectives are evaluated with the same measure-normalized held-out metric.

Let $S$ be the selected prediction targets, $n_a,n_v$ their modality counts, and
$\ell_i$ their token losses. With $m_a=m_v/q$,
\begin{align*}
L_{\mathrm{tok}}(w)&=\frac{w\sum_{i\in S_a}\ell_i+\sum_{i\in S_v}\ell_i}{n_a+n_v},\\
L_{\mathrm{meas}}(w)&=\frac{(w/q)\sum_{i\in S_a}\ell_i+\sum_{i\in S_v}\ell_i}{n_a/q+n_v}.
\end{align*}
For fixed logits and targets,
$L_{\mathrm{meas}}(w)=\frac{n_a+n_v}{n_a/q+n_v}L_{\mathrm{tok}}(w/q)$.
The factor is independent of $w$ but can vary with the selected target counts. This identity
relates objectives at fixed predictions; it does not imply identical optimization trajectories
or an exact relation between the best trained models on a finite grid.

\begin{figure}[H]
\centering\includegraphics[width=\textwidth]{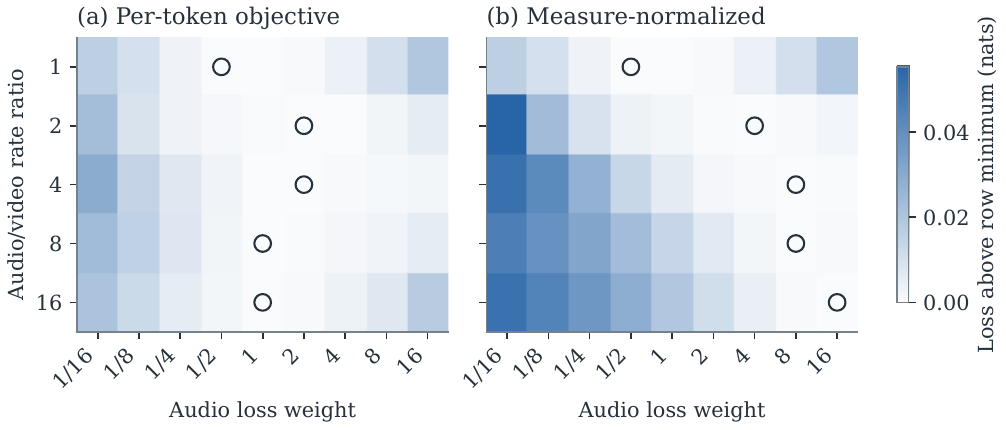}
\caption{\textbf{Loss weighting remains task dependent.} Each cell shows mean held-out loss
above the best tested weight at the same rate, across five seeds: (a) per-token objective,
(b) measure-normalized objective. Circles mark the grid minima.
The shared color scale covers the full observed range; each objective contains 225 runs.
The measure-objective minimum reaches the search boundary at $q=16$, so no extrapolated optimum
or fitted scaling law is reported.}
\label{fig:loss_weights}
\end{figure}

Observed best per-token weights are $(0.5,2,2,1,1)$ across increasing $q$; the measure-objective
grid gives $(0.5,4,8,8,16)$ (Figure~\ref{fig:loss_weights}). Directly setting $w=1/q$ in the per-token objective produces higher
held-out loss than $w=1$ at $q=4,8,16$ (Table~\ref{tab:loss_compensation}). The corresponding
nominal paired tests remain below 0.011 after Bonferroni adjustment over the four nontrivial
rates. This focused exploratory comparison does not use the main-text 54-contrast threshold.
Measure normalization ensures the stipulated refinement behavior of a fixed loss field;
it does not determine the best task tradeoff after learning.

\begin{table}[H]
\centering\small
\caption{\textbf{Inverse-rate loss weighting increases held-out loss at ratios 4--16.} Held-out loss at $w=1/q$ minus loss at $w=1$ in the per-token objective, paired across five seeds. Positive values favor $w=1$. Intervals are 95\%; tests are nominal and two-sided. The $q=1$ comparison is identical by construction.}
\label{tab:loss_compensation}
\begin{tabular}{@{}rrrr@{}}
\toprule
Rate ratio $q$ & Loss difference & 95\% CI & $p$ \\
\midrule
1 & +0.0000 & $[+0.0000,+0.0000]$ & 1.000 \\
2 & +0.0011 & $[-0.0016,+0.0038]$ & 0.312 \\
4 & +0.0068 & $[+0.0040,+0.0096]$ & 0.003 \\
8 & +0.0154 & $[+0.0114,+0.0193]$ & $4.2\times10^{-4}$ \\
16 & +0.0206 & $[+0.0149,+0.0262]$ & $5.3\times10^{-4}$ \\
\bottomrule
\end{tabular}
\end{table}

\subsection{Cross-rate deployment after training}\label{app:deployment}
Six LoRA runs train Qwen2.5-Omni-7B at 2\,fps: three seeds per attention arm, 600 WorldSense
training examples, 200 steps, rank 16, and learning rate $10^{-4}$.
Evaluation uses 500 held-out questions at $0.25,0.5,1,2,4$\,fps with a fixed audio hop.
The complete grid gives 7,500 paired question--rate--seed comparisons.
Baseline mean accuracy spans 37.27--38.73\% across rates, with its minimum at the training rate:
the deployed baseline is already robust at these rates, so the deployment question here concerns
token-budget flexibility rather than degradation repair.

For each seed, regress the accuracy difference (measure minus standard) on
$|\log(\mathrm{fps}/2)|$. The mean slope is $-0.00725$ per natural-log unit, with 95\% CI
$[-0.01576,+0.00126]$ and nominal $p = 0.067$. The sign convention here treats positive slopes
as increasing relative benefit away from the training rate. The result is unresolved across
three seeds, and natural frame changes do not satisfy the fixed-representation split assumptions.

\subsection{Temporal positional quantization}\label{app:lattice}
The released temporal positional construction quantizes a physical clock. To isolate this
operation, hold the decoded image set fixed and vary only the declared frame rate over
$1,2,3,5,6,10,12,24$\,fps on the same 540 MVBench questions. Compare the shipped float32
identifiers with their unquantized counterpart, using otherwise identical weights and inputs.
At rates $1,2,5,10$, the tested positions lie on the shipped lattice and all answers agree.
At $3,6,12,24$\,fps, respectively 1, 4, 2, and 7 answers change (Figure~\ref{fig:lattice}).

\begin{figure}[!htbp]
\centering\includegraphics[width=\textwidth]{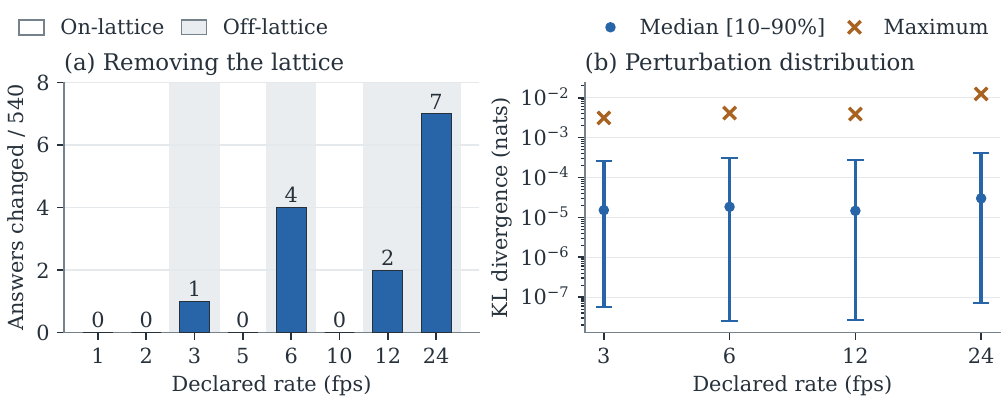}
\caption{\textbf{Only off-lattice declared rates perturb answers: 14 of 2{,}160, none on-lattice.} Left: changed answers at each
declared rate, with fixed image inputs and 540 paired questions per rate; off-lattice rates are
shaded. Right: median,
10th--90th percentiles, and maximum KL at the four rates that do not align with the lattice.
The intervals describe the distribution of perturbations, not confidence in a mean.}
\label{fig:lattice}
\end{figure}

Across the 2,160 evaluations at inexact rates, the median KL is approximately
$1.83\times10^{-5}$ nats and the maximum is 0.0124 nats. These are repeated evaluations of
540 questions, not 2,160 independent clips. The 14 answer changes across the complete grid
demonstrate a measurable positional perturbation; accuracy differences are unresolved.

\subsection{Overlapping spatial resolutions}\label{app:multiscale}
An exploratory probe combines four square-resized views of one decoded frame per clip, at side lengths
$(280,336,392,448)$ or $(196,280,392,560)$ pixels. Each configuration evaluates 400 questions.
If view $j$ produces $N_j$ image tokens, the measure arm assigns each a bias $-\log N_j$,
centered over image tokens while text bias remains zero. A reverse-group control reverses the four
resolution-level biases before recentering. It does not preserve their token-level histogram when
group sizes differ. This calibration also sets the relative image/text prior. Table~\ref{tab:multiscale} reports all completed arms.
The views overlap in physical space and are therefore not refinements of one partition;
their results bear on this implementation choice rather than on the exact partition theorem.
Neither size set resolves an accuracy difference from measure weighting.

\begin{table}[!htbp]
\centering\footnotesize
\caption{\textbf{Accuracy across overlapping resolutions.} Four hundred questions per configuration. Accuracy changes compare with standard attention within the same size set; intervals are paired and tests nominal. Each size set combines overlapping views and falls outside partition refinement.}
\label{tab:multiscale}
\begin{tabular}{@{}llrrr@{}}
\toprule
Size set & Arm & Accuracy (\%) & $\Delta$ (pp) [95\% CI] & $p$ \\
\midrule
Narrow & Standard & 54.25 & $+0.00$ & 1.000 \\
Narrow & Measure & 55.00 & $+0.75\;[-0.72,+2.22]$ & 0.508 \\
Narrow & Reverse groups & 55.75 & $+1.50\;[-0.20,+3.20]$ & 0.146 \\
Wide & Standard & 56.25 & $+0.00$ & 1.000 \\
Wide & Measure & 54.75 & $-1.50\;[-3.91,+0.91]$ & 0.307 \\
Wide & Reverse groups & 56.25 & $+0.00\;[-2.09,+2.09]$ & 1.000 \\
\bottomrule
\end{tabular}
\end{table}

\subsection{Pretrained support panel}\label{app:pretrained_support}
The support arm of Appendix~\ref{app:prospective} was also evaluated on the released checkpoint
without task training, on the same 2,489-question test grid and the same $K\in\{16,8,4,2\}$
groupings. These comparisons are descriptive; the confirmatory gates of
Appendix~\ref{app:prospective} apply to the trained panel only.
At the twofold budget, measure weighting with point support exceeds standard attention by
$+0.57$\,pp of mean accuracy and $+0.72$\,pp of all-partition-correct accuracy,
with mean pairwise JSD lower by $0.0022$ nats; the corresponding differences against calibrated
count are $+0.14$ and $+0.44$\,pp. Point support matches or exceeds the empirical
characteristic-function geometry for every arm and budget: under measure weighting, mean accuracy
is $62.99$\% versus $62.60$\% at twofold, $60.53$\% versus $58.23$\% at fourfold, and
$57.15$\% versus $54.68$\% at eightfold reduction. Accuracy degrades smoothly with the reduction
factor, from $65.05$\% at the native encoding to $57.15$\% at eightfold reduction under measure
weighting with point support. Decoder latency was recorded under two runtime builds and is
tabulated per build in the machine-readable report; the accuracy metrics pooled above do not
depend on the build.

\section{Deployment evaluation}\label{app:prospective}
This evaluation tests whether the weighting characterized by refinement symmetry improves
correctness across post-encoder partitions.
Table~\ref{tab:prospective} reports the primary deployment results; the support panel without
task training is reported in Appendix~\ref{app:pretrained_support}.

The core comparison holds a dense video encoding fixed and partitions its temporal tokens into
contiguous groups. Uniform and content-adaptive groups share total token budget and original-cell
coverage. Each group uses mass-weighted feature aggregation. Standard, global count, and
measure-weighted attention evaluate the same grouped features and physical positions. The
intervention changes post-encoder representations while holding raw-video encoding fixed.
A support arm uses the empirical characteristic function of inherited M-RoPE positions while
retaining the same native calibration.

\subsection{Acquisition, partitions, and training}\label{app:deployment_setup}
The 18-task MVBench panel contains 3,581 eligible questions from 3,290 source videos, with source-disjoint
splits of 696 training, 396 development, and 2,489 test questions (2,290 test videos).
The task families are action sequence, action prediction, action antonym, fine-grained action,
unexpected action, object existence, object interaction, object shuffle, moving direction,
action localization, scene transition, action count, moving count, moving attribute, state change,
character order, egocentric navigation, and counterfactual inference.
MVBench uses repeated question templates: 1,772 test questions share normalized text with another
test question, but the training, development, and test splits share no source videos. The deployment
adapters use only the 696 training questions; the exploratory LoRA study of
Appendix~\ref{app:trained} uses separate checkpoints and splits.
Fourteen missing media files and five single-alternative questions are excluded before outcomes.
Each input uses 32 uniform frames from the annotated interval, capped at 16 seconds and resized
to $224\times224$ pixels. The native encoding has 16 temporal patches and 64 spatial slots per
patch, or 1,024 visual tokens. We evaluate $K\in\{16,8,4,2\}$ contiguous temporal groups with
uniform, motion, entropy, and keyframe boundaries. Every grouping covers all native temporal patches.
Video biases are zero for standard attention, $\log(16/K)$ for global count, and $\log|G_i|$
for a measure-weighted group $G_i$; text biases remain zero.
Because native patches have equal mass, measure weighting here equals proportional attention
based on each group's inherited patch count~\citep{bolya2023tome} for every partition.
The global count comparator uses only the average group size, $16/K$. These rules coincide
for equal-size groups; their contrast on adaptive partitions tests the value of retaining
individual group masses, with features and point geometry held fixed.

Five shared checkpoints, with seeds zero through four, train only at the native partition.
The visual encoder is frozen. Query/value LoRA uses rank eight, scale two, and no dropout in the
text decoder; AdamW uses learning rate $2\times10^{-4}$, weight decay 0.01, accumulation over four
examples, gradient clipping at one, and 400 updates. All deployment rules use the same checkpoint
within a seed. Exact native-path controls precede endpoint analysis.

Training at the native partition and evaluating unseen groupings separates one-time model fitting
from deployment tokenization. Endpoints are mean and worst-partition accuracy, pairwise
Jensen--Shannon divergence across partitions, KL from the native answer distribution, agreement
across tokenizations, physical-token count, decoder latency,
and peak device memory. Source videos define clusters; repeated questions, budgets, and partitions
remain paired within their source. Multiple training seeds quantify training variability.

\subsection{Primary endpoints and inference}\label{app:deployment_inference}
The primary budget is $K=8$. Four two-sided contrasts compare measure with count and with standard
attention, for (i) correctness under every partition of a question and (ii) mean pairwise JSD
between its partition-specific answer distributions. The primary budget and four contrasts were
specified in the suite's report generator before data collection. The supplied reports record
the report-generator SHA-256 (\texttt{023b156d}$\ldots$\texttt{089849}) in their run headers.
This source record documents prespecification; the study has no external preregistration.
The development split supplied a two-question acquisition/throughput
pilot and was not used for hyperparameter selection or model comparison.
These endpoints differ from the minimum accuracy
across the four partitions, which Table~\ref{tab:perpartition} reports descriptively. A crossed source-video/seed bootstrap
with 20,000 draws gives descriptive intervals. Each contrast also uses a paired source-cluster
sign-flip test after averaging seeds and a paired five-seed $t$-test after averaging test questions.
The sign-flip test draws 20{,}000 Monte Carlo sign assignments, so its attainable minimum
$p$-value is $1/20{,}001\approx5\times10^{-5}$; each contrast gates on the larger of its two
$p$-values. This floor determines none of the four final gated $p$-values on MVBench.
The adjusted value is $\min\{1,4\max(p_{\rm source},p_{\rm seed})\}$, applying
four-comparison Bonferroni control within each benchmark.
The source test assesses source-level effects conditional on the evaluated checkpoints;
the seed test assesses training variability conditional on the test set. The primary gate requires
both tests to pass the four-comparison threshold. Source sign flips assume independent videos
and joint sign symmetry of paired cluster effects under the null; the five-seed $t$-test assumes
independent, approximately normal seed-level mean differences. The gate combines these two
conditional tests; inference over jointly unseen videos and training runs requires a jointly
calibrated test. The crossed bootstrap intervals describe joint resampling variability.
Training variability is quantified by the five seeds. Other budgets, support comparisons, and
pretrained-only panels are exploratory.

\begin{table}[t]
\centering\small
\caption{\textbf{Merging cuts decoder latency far more than total cost.} Measured inference costs for the point-support panel. Stage sum includes acquisition, saliency, grouping and decoder time; acquisition dominance keeps the stage-sum saving at 5.2--5.6\% from the native partition to eightfold merging. Peak is allocated CUDA memory, including the frozen dense encoder.}
\label{tab:deployment-cost}
\begin{tabular}{llrrr}
\toprule
Reduction & Rule & Decoder (ms) & Stage sum (ms) & Peak (GiB) \\
\midrule
1$\times$ & Standard & 303.6 & 4710.5 & 19.34 \\
1$\times$ & Global count & 294.9 & 4698.4 & 19.34 \\
1$\times$ & Measure & 285.4 & 4686.3 & 19.34 \\
2$\times$ & Standard & 143.1 & 4523.8 & 19.34 \\
2$\times$ & Global count & 147.5 & 4530.0 & 19.34 \\
2$\times$ & Measure & 152.0 & 4536.1 & 19.34 \\
4$\times$ & Standard & 105.6 & 4484.8 & 19.34 \\
4$\times$ & Global count & 106.0 & 4485.1 & 19.34 \\
4$\times$ & Measure & 106.2 & 4485.5 & 19.34 \\
8$\times$ & Standard & 73.1 & 4444.9 & 19.34 \\
8$\times$ & Global count & 73.1 & 4445.1 & 19.34 \\
8$\times$ & Measure & 72.8 & 4444.9 & 19.34 \\
\bottomrule
\end{tabular}
\end{table}

At the native encoding, mean test accuracy is 66.83\%. Under twofold reduction, measure weighting
attains 64.82\% mean accuracy and 57.44\% all-partition-correct accuracy
(Table~\ref{tab:prospective}). Three of the four prespecified contrasts are significant after
Bonferroni adjustment (Table~\ref{tab:deployment-inference}): all-partition-correct accuracy rises
by $+1.04$\,pp against global count ($p = 0.008$) and by $+0.93$\,pp against standard
attention ($p = 0.024$), and mean pairwise JSD falls by $0.0063$ nats against global count
($p = 7.7\times10^{-4}$). The JSD difference from standard attention remains unresolved
at $-0.0010$ nats ($p = 0.90$).
Decoder latency falls from 0.29--0.30\,s at the native encoding, depending on the rule, to
0.073\,s at eightfold reduction,
while peak device memory remains 19.34\,GiB at every budget (Table~\ref{tab:deployment-cost}).
The latency reduction reflects shorter decoder sequences; encoder cost is unchanged.

\subsection{A second benchmark: WorldSense}\label{app:worldsense}
The identical panel extends to a second benchmark. The WorldSense manifest excludes seventeen
questions with ambiguous or invalid answer alternatives and holds 3,155 questions from 1,657
source videos, with source-disjoint splits of 667 training, 300 development, and 2,188 test
questions over 1,158 test videos. Five separately trained checkpoints, seeds zero through four, train only at
the native partition on the 667 training questions under the protocol's training configuration,
and the same suite evaluates the identical grid on the test split. Acquisition, budgets,
partitions, attention rules, training configuration, and the four-contrast analysis match the
MVBench protocol. The WorldSense manifest has SHA-256
\texttt{bcc2df28}$\ldots$\texttt{acdc2b}; its run headers contain the same suite and base-checkpoint
hashes. Measurements use NVIDIA A800-SXM4-80GB GPUs. This panel reports accuracy and distributional
endpoints; its costs are not pooled with the H20 measurements. The development split is unused.
The calibration and cross-rate experiments in Appendices~\ref{app:released} and~\ref{app:deployment}
are separate exploratory studies.

All three rules coincide at the native encoding, at 39.53\% mean test accuracy; the benchmark is
substantially harder for this model than MVBench (66.83\%). At the primary budget, measure
weighting attains 31.73\% robust accuracy against 30.99\% for global count and 31.52\% for
standard attention, and mean pairwise JSD of 0.0331 nats against 0.0380 and 0.0333
(Table~\ref{tab:prospective}).
Robust accuracy rises by $+0.74$\,pp against global count ($p = 0.016$) and mean pairwise
JSD falls by $0.0049$ nats against global count ($p = 2\times10^{-4}$; here the sign-flip
Monte Carlo floor determines the final gated $p$-value).
Against standard attention, the robust-accuracy gain is $+0.21$\,pp with an interval spanning
zero ($p = 1.0$), compared with $+0.93$\,pp on MVBench. The JSD contrasts against standard
attention remain unresolved (WorldSense: $-0.0002$; MVBench: $-0.0010$ nats).
Both benchmarks therefore support groupwise over average mass weighting for all-partition
correctness and JSD at the primary budget.

\subsection{Representation and allocation diagnostics}\label{app:mechanism}
A diagnostic panel examines how merging changes input representations and attention allocation.
It uses the seed-0 trained adapter and 400 test questions from 400 distinct source videos,
stratified over 18 tasks. For the native encoding and all 36 merged configurations, the panel
records native video embeddings and the readout query's key scores at every decoder layer and
head: $400\times36\times28\times28\approx11.3$ million
question--configuration--layer--head comparisons.

The first diagnostic is the maximal input-embedding deviation
$\epsilon_h=\max_{j,\,i\in G_j}\|h_i-\bar h_{G_j}\|_2$, where $\bar h_{G_j}$ is a group's mean
native embedding at a matched spatial slot. It is identical across attention rules because their input groups are shared.
Its mean increases with reduction on every partition (Table~\ref{tab:mechanism}); for uniform
groups it rises from 45.5 to 65.3 and 74.7 at two-, four-, and eightfold reduction.
The last value is 55\% of the mean native video-embedding diameter $D_h=135.9$.
Across partitions, 94.5--100\% of questions show a strict increase from twofold to eightfold.
These are input-representation measurements, distinct from the layerwise value errors in
Theorem~\ref{thm:stability}.

The second diagnostic compares attention distributions on a common index set. Spread each merged
key's attention probability uniformly over its inherited native patches, and align text keys by
sequence order. Let $P$ be the native readout-row distribution, $\widetilde Q$ the resulting
merged distribution, $T=\operatorname{TV}(P,\widetilde Q)$, and
$\Delta_{\rm attn}=\operatorname{osc}\log(\widetilde Q/P)$.
Lemma~\ref{lem:tilt} gives $T\le\tanh(\Delta_{\rm attn}/4)$ for each layer and head.
Every recorded comparison satisfies this inequality; the largest residual
$T-\tanh(\Delta_{\rm attn}/4)$ is $-4.2\times10^{-5}$.
This checks the change-of-distribution inequality, independently of the values.
For measure weighting, the pullback also agrees with the physical-mass coupling of
Theorem~\ref{thm:stability}; for the other rules, $\Delta_{\rm attn}$ includes their departure
from inherited mass weighting.

Table~\ref{tab:mechanism} scales $T$ by the per-question input-embedding diameter $D_h$ for a
common descriptive scale. This diagnostic is largest under standard attention in every
budget--partition cell. Under measure weighting it is no larger than under global count weighting in
eleven of twelve cells, including the three exact equalities on uniform partitions; the exception
is a 0.04\% inversion for the eightfold entropy partition.
Among the 271 natively correct questions, the fraction answered incorrectly under at least one
merged partition rises from 15.5--17.0\% at twofold to 29.9--31.7\% at eightfold reduction.
Those questions have higher $D_hT$ than questions remaining correct in all nine budget--rule
comparisons (12.6 versus 10.2 for measure weighting at eightfold). Separation by $\epsilon_h$
and the recorded score range varies across budgets.

These associations describe input embeddings and attention allocation for one adapter seed.
Here $D_hT$ supplies a common descriptive scale for attention changes. A bound on a decoder
layer's output requires values and diameters from that same layer; a network bound also requires
the complete block discrepancies in Corollary~\ref{cor:network}. Thus
$\epsilon_h+D_hT$ remains a diagnostic quantity rather than a validated decoder bound.
The panel's cell standard error is approximately 2.4\,pp. The paired five-seed deployment study
quantifies the roughly one-point accuracy differences; explaining their budget dependence
requires further analysis of the decoder's layerwise errors.

\begin{table}[!htbp]
\centering\small
\caption{\textbf{Input perturbation and attention redistribution under merging.}
Mean maximal input-embedding deviation $\epsilon_h$ (400 questions; standard error $\le0.72$;
identical across rules) and the embedding-scaled attention diagnostic $D_hT$, averaged over
layers and heads (ranges across four partitions). The mean native video-embedding diameter
is $D_h=135.9$. The quantities in the first two blocks have input-embedding units.}
\label{tab:mechanism}
\begin{tabular}{lrrr}
\toprule
 & $2\times$ & $4\times$ & $8\times$ \\
\midrule
\multicolumn{4}{l}{\textit{Input-embedding deviation $\epsilon_h$ by partition}} \\
uniform  & 45.5 & 65.3 & 74.7 \\
motion   & 55.8 & 66.6 & 74.8 \\
entropy  & 51.5 & 66.1 & 75.0 \\
keyframe & 62.2 & 71.0 & 75.6 \\
\midrule
\multicolumn{4}{l}{\textit{Embedding-scaled attention TV, $D_hT$, by rule (partition range)}} \\
Standard & 7.68--8.17 & 11.46--11.83 & 14.47--14.54 \\
Global count    & 6.11--7.07 & 8.56--9.42   & 10.70--11.04 \\
Measure  & 6.11--6.51 & 8.56--8.85   & 10.68--10.78 \\
\bottomrule
\end{tabular}
\end{table}

\subsection{Per-partition decomposition}\label{app:perpartition}
Table~\ref{tab:perpartition} decomposes the trained panel by
allocation partition (597{,}360 measurement rows; 2{,}489 held-out questions, five training
seeds).

\begin{table}[!ht]
\centering\small
\caption{\textbf{Measure-versus-global-count differences arise on adaptive partitions.} Pooled MVBench test accuracy (\%) for
Qwen2.5-Omni-7B at $K=8$, over 12{,}445 question--seed pairs per cell.
Differences are measure minus comparator in pp, computed before rounding. On the uniform
partition the measure and count rules coincide exactly. The final row tabulates the
worst-partition-accuracy endpoint, which the keyframe partition attains for every arm at this
budget.}
\label{tab:perpartition}
\begin{tabular}{@{}lrrrrr@{}}
\toprule
Partition & Standard & Global count & Measure & \shortstack{$\Delta$ vs. std.\\(pp)} & \shortstack{$\Delta$ vs. count\\(pp)} \\
\midrule
uniform  & 64.94 & 65.42 & 65.42 & +0.47 & 0.00 \\
motion   & 64.19 & 64.36 & 64.69 & +0.51 & +0.33 \\
entropy  & 64.37 & 65.01 & 65.12 & +0.75 & +0.11 \\
keyframe & 63.03 & 63.78 & 64.07 & +1.04 & +0.28 \\
\midrule
worst partition (keyframe) & 63.03 & 63.78 & 64.07 & +1.04 & +0.28 \\
\bottomrule
\end{tabular}
\end{table}

Under the uniform partition, the measure and count rules produce equal-size groups and
bitwise-identical answer logits in all 12{,}445 question--seed pairs at every budget, and their
accuracies coincide exactly (difference 0.0), confirming the analytical identity between the two
rules under uniform allocation. Measure-versus-global-count differences therefore arise entirely on adaptive partitions.
At the primary budget of eight groups, the per-partition accuracy gain of measure over the
standard rule is largest on the keyframe partition: $+0.51$ (motion), $+0.75$ (entropy), and
$+1.04$\,pp (keyframe). Against the count rule the ordering differs: the gains are $+0.33$
(motion), $+0.11$ (entropy), and $+0.28$\,pp (keyframe), largest on motion, and at deeper budgets
the keyframe gain over count turns negative ($-0.42$\,pp at fourfold and $-0.20$\,pp at eightfold
reduction). The keyframe partition is the hardest for every arm at the primary budget, and the
measure rule lifts that worst partition over both comparators, from 63.03 (standard) and 63.78
(count) to 64.07 (measure). This is a descriptive worst-partition result, distinct from the
all-partition-correct endpoint.

\end{document}